\def\eqref#1{equation~\ref{#1}}
\def\Eqref#1{Equation~\ref{#1}}
\def\1{\bm{1}}
\DeclareMathAlphabet{\mathsfit}{\encodingdefault}{\sfdefault}{m}{sl}
\SetMathAlphabet{\mathsfit}{bold}{\encodingdefault}{\sfdefault}{bx}{n}
\newcommand{\R}{\mathbb{R}}
\DeclareMathOperator*{\argmin}{arg\,min}
\newtheorem{theorem}{Theorem}
\newtheorem{lemma}{Lemma}
\theoremstyle{remark}
\title{Adaptive Gradient Method with Resilience and Momentum}
\author{Jie Liu$^{1}$, Chen Lin$^{1}$, Chuming Li$^1$, Ming Sun$^1$, Junjie Yan$^1$\\
$^1$Sensetime Research Group\\
% lichuming,linchen,sunming1,yanjunjie
\texttt{\{liujie4,linchen\}@sensetime.com} \\
\And
Lu Sheng$^2$ \\
$^2$Beihang University\\
\texttt{lsheng@buaa.edu.cn}
\And
Wanli Ouyang$^3$ \\
$^3$The University of Sydney\\
\texttt{wanli.ouyang@sydney.edu.au}
}
\begin{document}
\maketitle
\begin{abstract}

Several variants of stochastic gradient descent (SGD) have been proposed to improve the learning effectiveness and efficiency when training deep neural networks, among which some recent influential attempts would like to adaptively control the parameter-wise learning rate (\emph{e.g.}, Adam and RMSProp).
Although they show a large improvement in convergence speed, most adaptive learning rate methods suffer from compromised generalization compared with SGD.
In this paper, we proposed an Adaptive Gradient Method with Resilience and Momentum (AdaRem), motivated by the observation that the oscillations of network parameters slow the training, and give a theoretical proof of convergence.
For each parameter, AdaRem adjusts the parameter-wise learning rate according to whether the direction of one parameter changes in the past is aligned with the direction of the current gradient, and thus encourages long-term consistent parameter updating with much fewer oscillations.
Comprehensive experiments have been conducted to verify the effectiveness of AdaRem when training various models on a large-scale image recognition dataset, \emph{i.e.}, ImageNet, which also demonstrate that our method outperforms previous adaptive learning rate-based algorithms in terms of the training speed and the test error, respectively.

\end{abstract}

\section{Introduction}

Stochastic gradient descent based optimization methods, \emph{e.g.}, SGD~\citep{robbins1951stochastic}, have become the most popular algorithms to train deep neural networks, especially used in high-level vision tasks such as image recognition~\citep{hu2018squeeze}, object detection~\citep{song2020revisiting}, instance segmentation ~\citep{dai2016instance} and etc.

However, as stated in~\citet{luo2019adaptive}, one limitation of SGD is that it uniformly scales each element in the gradient of one network parameter.
Recent efforts have discovered a variety of adaptive methods that rescale the gradient based on the element-wise statistics.
Referred as adaptive learning rate family, these methods include Adagrad~\citep{duchi2011adaptive}, Adadelta \citep{zeiler2012adadelta}, Adam~\citep{kingma2014adam}, AdaBound ~\citep{luo2019adaptive} and RMSProp \citep{tieleman2012divide}.

In particular, Adagrad, which firstly proposes to adaptively modify the learning rates, was later found to have poor performance because of the rapid decay of the learning rate. Many variants of Adagrad, such as RMSProp, Adam, Adadelta, Nadam were proposed to solve this problem by adopting an exponential moving average. Among these adaptive optimization methods, Adam has been widely used in the community due to its fast convergence. 
Despite its popularity, \citet{wilson2017marginal} recently found that the generalization performance of Adam and its variants is worse than their non-adaptive counterpart: SGD with momentum (SGDM) and weight decay~\citep{krogh1992simple}, even when better memorization on the train set is observed. 

From a different viewpoint towards the adaptive learning rate methods, we investigate the trajectories of model parameters in the training process, in which a lot of oscillations are observed for each parameter.
It seriously hinders the network training. Oscillations might come from the evaluation at random subsamples (mini-batches) of data points, or arise from the inherent function that changes dramatically locally. To address this issue, we propose Adaptive Gradient Methods with REsilience and Momentum (AdaRem), a new adaptive optimization method that reduces useless oscillations by introducing damping. For each parameter, AdaRem adjusts the learning rate according to whether the direction of the current gradient is the same as the direction of parameter change in the past. Furthermore, we find that weight decay affects the magnitude of the gradient, thus affecting the estimation of the update direction of the parameter.
Inspired by \citet{li2019exponential}, we also propose AdaRem-S, a variant of AdaRem that constrains the optimization of the neural network to a sphere space so as to eliminate the influence of weight decay. 

Adaptive learning rate methods are usually hard to tune in practical scenarios. However, we find that AdaRem and AdaRem-S usually perform well by simply borrowing SGDM's hyper-parameters, which tremendously reduces the burden of hyper-parameter tuning.
Furthermore, adaptive methods suffer "the small learning rate dilemma"~\citep{chen2018closing}, which means adaptive gradient methods have to choose a very small base learning rate to allieviate over-large learning rates on some coordinates. However, after several rounds of decaying, the learning rates are too small to make any significant progress in the training process~\citep{chen2018closing}. %
With a learning rate as large as that used in SGD, our methods avoid this dilemma.

Finally, we evaluate AdaRem and AdaRem-S by training classifiers on the ImageNet~\citep{deng2009imagenet} dataset. Most of the previous works about optimizers only evaluate their methods on small datasets such as CIFAR-10~\citep{krizhevsky2009learning}. We argue that these datasets are not enough representative to fairly and comprehensively measure the performance of optimizers for nowadays vision tasks. Therefore, we conduct all experiments directly on ImageNet with various models. Experimental results show that AdaRem (including its spherical version) has higher training speed and at the meantime leads to improved performance on the test datasets compared to existing popular adaptive methods.
We show that AdaRem and AdaRem-S close the performance gap between adaptive gradient methods and SGDM empirically.
Furthermore, AdaRem-S can bring considerable improvement over SGDM in terms of the final performance, especially on small networks. 

In summary, our main contributions are the following:
\begin{itemize}[leftmargin=*]
\item We propose AdaRem, a novel and simple adaptive optimization method that accelerates the training process by reducing useless oscillations, which enjoys a fast convergence speed and performs as well as SGDM on the unseen data.
\item We improve our method by constraining the optimization on a sphere space. The resulted variant: AdaRem-S shows significant improvement on top-1 accuracy for MobileNetV2 \citep{sandler2018mobilenetv2} and ShuffleNetV2 \citep{ma2018shufflenet} on ImageNet.
\end{itemize}

\section{Related work}
Optimization methods directly related to AdaRem are Rprop and Momentum. Their similarities and differences are discussed below. Other adaptive optimization methods mainly include Adam and its variants. \citet{kingma2014adam} proposes Adam which is particularly popular on vision tasks. \citet{dozat2016incorporating} increases the performance by combining Adam and the Nesterov accelerated gradient. AdamW \citep{loshchilov2017decoupled} attempts to recover the original formulation of weight decay regularization by decoupling the weight decay from the gradient updates and thus substantially improves Adam’s generalization performance. Recently, \citet{reddi2019convergence} observes that Adam does not converge in some settings due to the “short memory” problem of the exponential moving average. They fix this problem by endowing Adam with “long-term memory” of past gradients and propose Amsgrad optimizer. Based on Amsgrad, a series of modified Adam optimization methods emerge, including PAdam \citep{chen2018closing}, AdaShift \citep{zhou2018adashift}, AdaBound \citep{luo2019adaptive}, NosAdam \citep{huang2018nostalgic}.

\paragraph{RProp}
A closely related adaptive optimization method is Rprop \citep{riedmiller1992rprop}. There are a few important differences between Rprop and AdaRem: Rprop increases or decreases each learning rate ${\eta}_i$ according to whether the gradient concerning ${w}_i$ has changed sign between two iterations or not, whereas AdaRem adjusts ${\eta}_i$ according to whether a running average of the history gradient has an opposite sign with the gradient of current iteration or not. What’s more, AdaRem changes the learning rate softly while Rprop changes the learning rate multiplicatively (e.g. times 1.2). Furthermore, Rprop does not work with mini-batches \citep{tieleman2012divide}.

\paragraph{Momentum}
Momentum \citep{sutskever2013importance} helps accelerate SGD in the relevant direction and dampens oscillations. However, it directly uses the momentum term as the damping, without considering whether the damping term is in the same direction as the current gradient and adaptively adjust the learning rate.

\section{Method}
In this section, we introduce our AdaRem and AdaRem-S methods. Firstly, we offer the notations and preliminaries. Secondly, we explain the motivations of our approach. Then we introduce our momentum guided adaptive gradient method in detail. Finally, the spherical version of our method called AdaRem-S is presented.

\subsection{Notations and preliminaries}
Given two vectors $v, v \in \R^d$, we use $\langle v, v \rangle$ for inner product, $v \odot v$ for element-wise product and $v/v$ to denote element-wise division. 
For the set of all positive definite $d \times d$ matrices. We use $\mathcal{S}_+^d$ to denote it.
For a vector $a \in \R^d$ and a positive definite matrix $A \in \R^{d \times d}$, we use $a/A$ to denote $A^{-1}a$ and $\sqrt{A}$ to denote $A^{1/2}$.
The projection operation $\Pi_{\mathcal{F}, A}(y)$ for $A \in \mathcal{S}_+^d$ is defined as $\argmin_{x \in \mathcal{F}} \|A^{1/2}(x-y)\|$ for $y \in \R^d$.
Furthermore, we say $\mathcal{F}$ has bounded diameter $D_\infty$ if $\|x-y\|_\infty \leq D_\infty$ for all $x, y \in \mathcal{F}$.

Scalars and vectors are denoted in lowercase and bold lowercase, respectively. Our goal is to solve the optimization problem:
\(\boldsymbol{\theta}^{*}=\arg \min _{\boldsymbol{\theta} \in \mathbb{R}^{n}} f(\boldsymbol{\theta}).\)
We denote the gradient with $\boldsymbol{g}_{t}=\nabla_{\boldsymbol{\theta}} f(\boldsymbol{\theta})$ at timestep $t$. We use $g_{t,i}$ to represent the $i^{th}$ component of vector $\boldsymbol{g_t}$ and $\left\|\boldsymbol{\theta}_{t}\right\|$ to represent the length of vector $\boldsymbol{\theta}_{t}$.
Consider the general formula of adaptive optimization methods: 
\(\boldsymbol{\theta}_{t+1}=\boldsymbol{\theta}_{t}-\eta_{t}\boldsymbol{a}_{t}\odot \boldsymbol{g}_{t}, \label{3.1-1}\)
where $\eta_t$ is the learning rate at timestep $t$ and $\odot$ is elementwise multiplication. We use $\boldsymbol{a}_{t}$ to adjust $\eta_{t}$ adaptively.

Following  \citep{reddi2019convergence}, we use online convex programming framework to analyze our optimization methods.
It can be formulated as a repeated game between a player and an adversary. At iteration t, the player chooses $\boldsymbol{\theta}_t$ from convex set $\mathcal{F}$ as learned parameters of the model. Then the adversary chooses a convex loss function $f_t$ which can be seemed as the loss of the model with the chosen parameters and the next minibatch's data. The method's regret at the end of $T$ iterations of this process is given by \(R_T = \sum_{t=1}^T f_t(\boldsymbol{\theta}_t) - \min_{\boldsymbol{\theta} \in \mathcal{F}} \sum_{t=1}^T f_t(\boldsymbol{\theta})\), where the former term is the total loss and the latter term is the smallest total loss of any fixed parameters. 
Throughout this paper, we assume that the feasible set $\mathcal{F}$ has bounded diameter and $\|\nabla f_t(\boldsymbol{\theta})\|_\infty$ is bounded for all $t \in [T]$ and $\theta \in \mathcal{F}$. Our aim is to devise an algorithm that ensures $R_T = o(T)$, which implies that on average, the model's performance converges to the optimal one.

\subsection{Motivation}
\subsubsection{Key observation}
As a powerful technique to reduce oscillations during training, momentum has become the standard configuration of SGD. However, as shown in Figure \ref{Fig.mtd0}, we observe that the oscillation is still severe for each parameter. We argue that directly using the momentum instead of the gradient to reduce the oscillation in SGDM does not give full play to the effect of momentum. 
\subsubsection{The quantitative index of oscillations}
 To better understand the oscillation of various optimization methods in the training process, we propose a quantitative metric to evaluate it. As seen in Figure \ref {Fig.mtd0}, we record the path length and displacement from the starting point to the ending point for each parameter and define “the path length per displacement”: 
\(q=l/d\)
, where path length $l$ is the total distance a parameter travels from a starting point and displacement $d$ is the shortest distance between the ending point and starting point of a parameter. Assuming the number of parameters is $n$, We define the average value of $q$ among all parameters:
\[Q=\frac{1}{n} \sum_{i=1}^{n} q_{i}\]to measure the degree of oscillation in the training process.
As seen in Figure \ref {Fig.0}, the $Q$ value of SGDM is significantly smaller than SGD verifying the effect of momentum on oscillation reduction.
Furthermore, Adam has a smaller $Q$ than SGDM indicating an adaptively changing learning rate may be useful to suppress oscillations. 

\begin{figure}[h] %[H]
%  \vspace{-30pt}
\centering 
\begin{minipage}[b]{0.46\textwidth} 
\centering 
\includegraphics[width=0.8\textwidth]{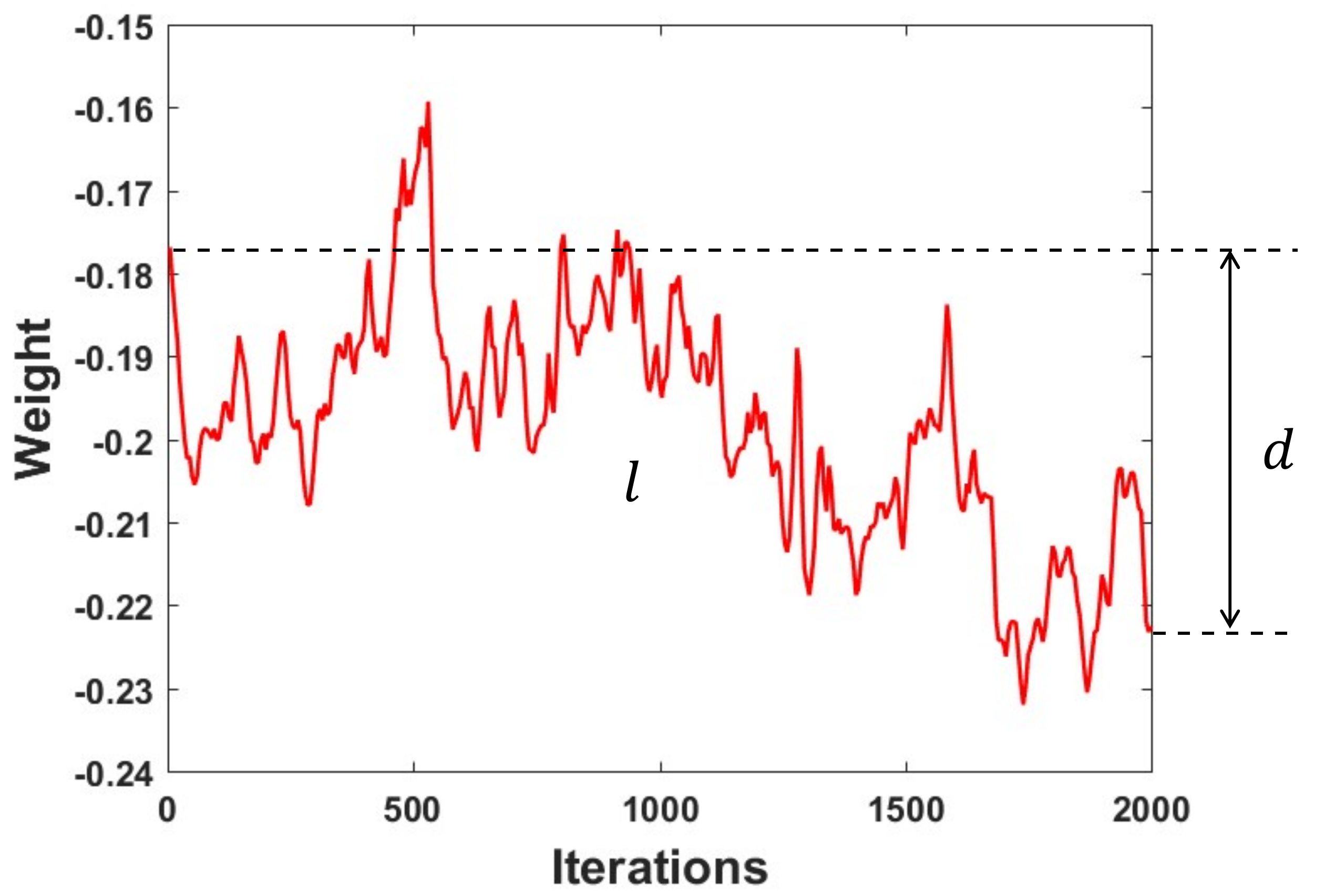}
\caption{The parameter changes as training is going on. From Iteration 0 to Iteration 2k, the effective displacement of this parameter is $d$(about 0.04), but the path length $l$ is observably much larger than $d$.}
\label{Fig.mtd0}
\end{minipage}
\hspace{9mm}
\begin{minipage}[b]{0.46\textwidth} 
\centering 
\includegraphics[width=0.76\textwidth]{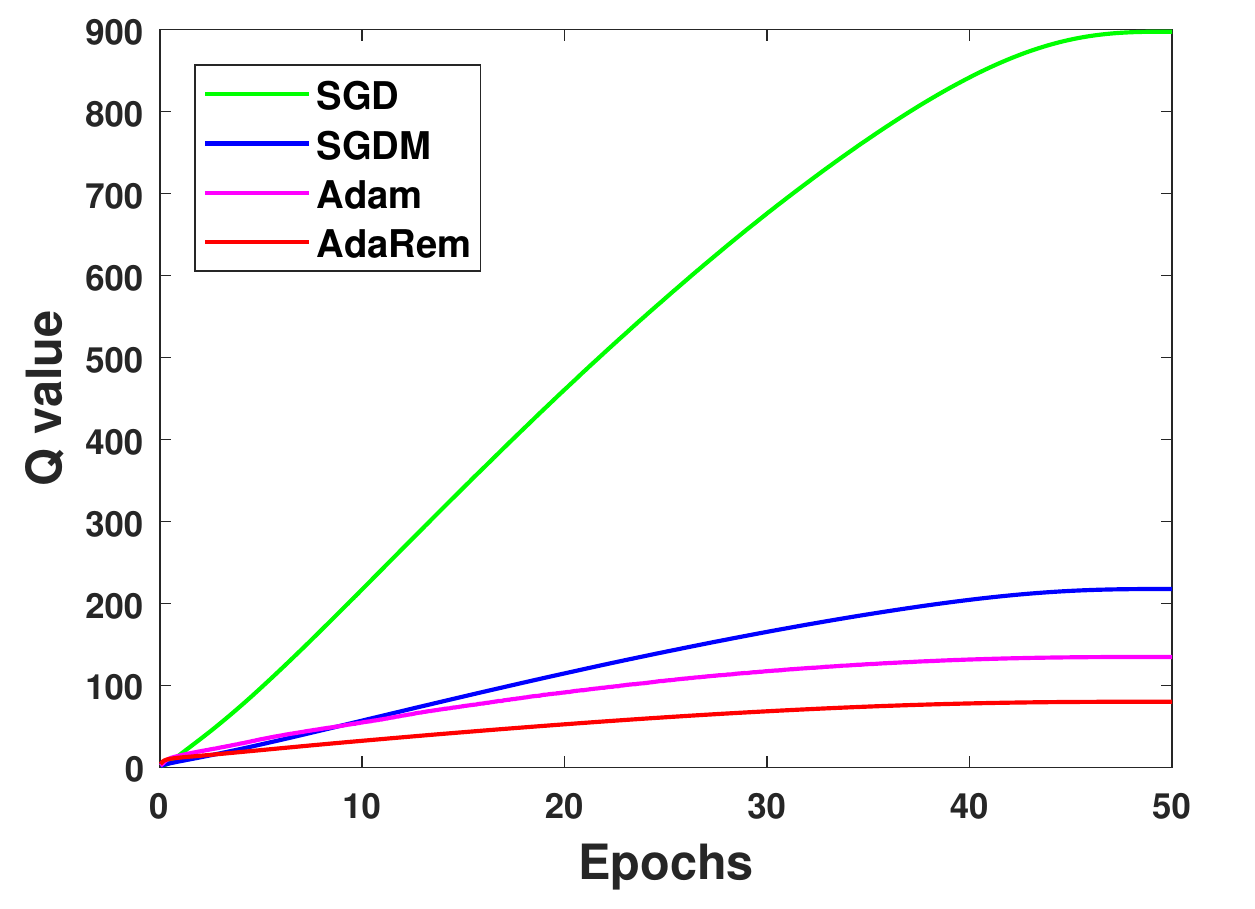}
\caption{The Q value of AdaRem, Adam and SGDM for ResNet-18 on ImageNet. Higher Q value means more oscillations and useless updates. Compared with SGDM and Adam, AdaRem reduces oscillation more effectively.}
\label{Fig.0}
\end{minipage}
\end{figure}

\subsubsection{Adaptive gradient with resilience and momentum}
Consider an elastic ball rolling down a very rough road from a high place, with a lot of oscillations going back and forth as it falls. If the ball can remember the update direction of each coordinate and change the learning rate based on whether the current update direction is consistent with "memory", it will descend more smoothly and fastly. We propose a similar adaptive way to employ momentum with resilience, which adjusts the learning rate according to whether the direction of momentum is the same as the direction of the current gradient for each parameter.
As shown in Figure \ref {Fig.0}, our method AdaRem has the smallest $Q$ value which means AdaRem can accelerate SGD, dampen oscillations and reduce useless updates during training more efficiently than SGDM and Adam.

\subsection{AdaRem}
In this section, we describe the AdaRem algorithm and discuss its properties. 
The algorithm maintains an exponential moving average(EMA) of the gradient($\boldsymbol{m}_t$) where the hyper-parameter $\beta \in[0,1)$ control the exponential decay rate. The moving average is an estimation of the mean of the gradient, we call it momentum, whose $i^{th}$ component(${m}_{t,i}$) can represent the update trend(increase or decrease) of the corresponding parameter of neural network in the past. 
An important property of AdaRem's update rule is its special adjustment rule of learning rate:
\begin{equation}
{b}_{t,i}=\frac{{g}_{t,i} \times {m}_{t,i}}{\left|{g}_{t,i}\right| \max \left|\boldsymbol{m}_{t}\right|+\epsilon},
\end{equation}
\vspace{-1mm}
\begin{equation}
\boldsymbol{a}_{t}=1+\lambda^t\boldsymbol{b}_{t},
\end{equation}
where $\epsilon$ is a term added to the denominator to improve numerical stability. The greater the component of the momentum $\boldsymbol{m}_{t}$, the greater the damping of this term. 
The adjustment coefficient $\boldsymbol{a_t}$ is bounded: ${b}_{t,i} \in[-1,1]$, and then ${a}_{t,i} \in[1-\lambda,1+\lambda]$, which can obviate very large learning rates on some coordinates and escape from "the small learning rate dilemma" \citep{chen2018closing}.
For each parameter, AdaRem adjusts the learning rate according to whether the direction of the current gradient is the same as the direction of parameter change in the past. There are two cases:
\begin{enumerate}%[leftmargin=*]
\item[(1)] ${g}_{t,i} \times {m}_{t,i} \geq 0, {b}_{t,i} \in[0,1], {a}_{t,i} \in[1,1+\lambda]$. The direction of the current gradient is the same as the direction of parameter change in the past. Therefore, the current update should be encouraged. We use $\left|{g}_{t,i}\right| \max \left|\boldsymbol{m}_{t}\right|$ to normalize ${b}_{t,i} \in[-1,1]$;
\item[(2)] ${g}_{t,i} \times {m}_{t,i}<0, {b}_{t,i} \in[-1,0), {a}_{t,i} \in[1-\lambda,1)$. The direction of the current gradient is opposite to the direction of parameter change in the past. Therefore, the current update should be suppressed. 

\end{enumerate}
An important property of AdaRem is that the gradient does not change the sign in spite of a large momentum with opposite sign. This means AdaRem is friendly to the situation where the gradient changes dramatically. 

Following \citet{reddi2019convergence} and \citet{luo2019adaptive}, we analyze the convergence of AdaRem using the online convex programming framework.
We prove the following key result for AdaRem.

\begin{theorem}
\label{the:AdaRem}
Let $\{\theta_t\}$ be sequences obtained from Algorithm~\ref{alg1},
$\eta_{t,i}=\frac{\eta}{\sqrt{t}}\left(1+\lambda^t\frac{{g}_{t,i} \times {m}_{t,i}}{\left|{g}_{t,i}\right| \max \left|\boldsymbol{m}_{t}\right|+\epsilon}\right)$ and $\gamma=0$. Assume that $\|\boldsymbol{x} - \boldsymbol{y}\|_\infty \leq D_\infty$ for all $\boldsymbol{x}, \boldsymbol{y} \in \mathcal{F}$ and $\|\nabla f_t(\boldsymbol{x})\| \leq G_2$ for all $t \in [T]$ and $\boldsymbol{x} \in \mathcal{F}$.
For $\theta_t$ generated using the AdaRem algorithm, we have the following bound on the regret
\[
    R_T \leq \frac{D_\infty^2d}{\eta(1-\lambda)^3}\Bigg[(5-4\lambda)\sqrt{T}+2\lambda-1\Bigg]+\frac{D_\infty^2d}{2\eta(1-\lambda)}+ G_2^2d\eta(2\sqrt{T}-1).
\]
\end{theorem}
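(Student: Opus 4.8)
The plan is to follow the standard online-convex-optimization regret template for adaptive methods (as in Reddi et al. and Luo et al.), adapted to the per-coordinate step size $\eta_{t,i}$. First I would invoke convexity of each $f_t$ to reduce the regret to a sum of linear terms: writing $\boldsymbol{\theta}^\ast=\arg\min_{\boldsymbol{\theta}\in\mathcal F}\sum_t f_t(\boldsymbol{\theta})$,
\[
R_T=\sum_{t=1}^T\big(f_t(\boldsymbol{\theta}_t)-f_t(\boldsymbol{\theta}^\ast)\big)\le\sum_{t=1}^T\langle\boldsymbol{g}_t,\boldsymbol{\theta}_t-\boldsymbol{\theta}^\ast\rangle=\sum_{t=1}^T\sum_{i=1}^d g_{t,i}(\theta_{t,i}-\theta^\ast_i).
\]
Since with $\gamma=0$ the update is $\boldsymbol{\theta}_{t+1}=\Pi_{\mathcal F,V_t}(\boldsymbol{\theta}_t-V_t^{-1}\boldsymbol{g}_t)$ with the diagonal matrix $V_t=\mathrm{diag}(1/\eta_{t,i})\in\mathcal S_+^d$ (well defined because $a_{t,i}=1+\lambda^t b_{t,i}\ge 1-\lambda>0$, so $\eta_{t,i}>0$), the non-expansiveness of the weighted projection in the $V_t$-norm gives $\sum_i\tfrac{1}{\eta_{t,i}}(\theta_{t+1,i}-\theta^\ast_i)^2\le\sum_i\tfrac{1}{\eta_{t,i}}(\theta_{t,i}-\eta_{t,i}g_{t,i}-\theta^\ast_i)^2$. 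Expanding the squares coordinatewise and rearranging yields the fundamental one-step inequality
\[
\sum_{i=1}^d g_{t,i}(\theta_{t,i}-\theta^\ast_i)\le\frac12\sum_{i=1}^d\frac{1}{\eta_{t,i}}\big[(\theta_{t,i}-\theta^\ast_i)^2-(\theta_{t+1,i}-\theta^\ast_i)^2\big]+\frac12\sum_{i=1}^d\eta_{t,i}\,g_{t,i}^2.
\]

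Next I would sum over $t$ and split the bound into a distance part and a gradient part. The gradient part is routine: using $\eta_{t,i}\le\frac{\eta}{\sqrt t}(1+\lambda^t)\le\frac{2\eta}{\sqrt t}$, the bound $|g_{t,i}|\le\|\boldsymbol{g}_t\|\le G_2$, and $\sum_{t=1}^T t^{-1/2}\le 2\sqrt T-1$, one gets $\sum_{t,i}\tfrac{\eta_{t,i}}{2}g_{t,i}^2\le G_2^2 d\,\eta(2\sqrt T-1)$, which is exactly the last term of the claimed bound.

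The heart of the proof, and the step I expect to be the main obstacle, is the telescoping of the distance part $\sum_t\tfrac{1}{2\eta_{t,i}}[(\theta_{t,i}-\theta^\ast_i)^2-(\theta_{t+1,i}-\theta^\ast_i)^2]$ by summation by parts. In the classical analysis this telescopes cleanly because the inverse step size is monotone in $t$; here it is not, since $\eta_{t,i}=\frac{\eta}{\sqrt t}(1+\lambda^t b_{t,i})$ and the resilience factor $1+\lambda^t b_{t,i}$ depends on the data-dependent sign alignment $b_{t,i}\in[-1,1]$ and need not be monotone. To control the resulting coefficients $\tfrac{1}{\eta_{t,i}}-\tfrac{1}{\eta_{t-1,i}}$, I would use $1+\lambda^t b_{t,i}\in[1-\lambda^t,1+\lambda^t]$ and decompose the inverse step size into a monotone scale $\frac{\sqrt t}{\eta}$ plus a correction of size $O(\sqrt t\,\lambda^t/(1-\lambda))$. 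The monotone scale telescopes and, together with $(\theta_{t,i}-\theta^\ast_i)^2\le D_\infty^2$, produces the $\sqrt T$ behaviour; the corrections are summed geometrically using $\sum_{t\ge1}t\lambda^t=\lambda/(1-\lambda)^2$. It is this geometric bookkeeping, combined with the $\frac{1}{1-\lambda}$ lower bound on the denominators, that is responsible for the $(1-\lambda)^{-3}$ factor and the explicit constants $(5-4\lambda)$ and $(2\lambda-1)$ in the statement; the remaining boundary term $\frac{D_\infty^2 d}{2\eta(1-\lambda)}$ comes from the $t=1$ contribution $\tfrac{1}{\eta_{1,i}}\le\tfrac{1}{\eta(1-\lambda)}$.

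Finally I would collect the three contributions, the telescoped distance part (the bracketed $\tfrac{D_\infty^2 d}{\eta(1-\lambda)^3}[(5-4\lambda)\sqrt T+2\lambda-1]$ term plus the boundary term) and the gradient part, and sum over the $d$ coordinates to reach the stated regret bound. The delicate points are confirming the weighted projection is legitimate (i.e. $V_t$ positive definite, which holds since $\lambda<1$) and, above all, keeping the non-monotone correction terms under control in the summation by parts; everything else is the standard convexity-plus-telescoping computation.
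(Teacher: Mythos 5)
Your proposal is correct and follows the same template as the paper's proof: convexity to linearize the regret, the McMahan--Streeter non-expansiveness of the weighted projection to obtain the one-step inequality, summation over $t$ with a split into a distance part and a gradient part, the bound $\sum_{t=1}^T t^{-1/2}\le 2\sqrt{T}-1$ for the gradient part, and the $t=1$ boundary term $\eta_{1,i}^{-1}\le \tfrac{1}{\eta(1-\lambda)}$. The one place where your route differs is the distance part. The paper's key Lemma bounds the full total variation $\sum_{t=2}^{T}\lvert \eta_{t,i}^{-1}-\eta_{t-1,i}^{-1}\rvert$ by writing $\eta_{t+1}-\eta_{t}=a_{t+1}(b_{t+1}-b_t)+(1+b_t)(a_{t+1}-a_t)$ with $a_t=1/\sqrt{t}$ and $b_t=\lambda^{t}c_t$, and then using the crude estimate $\lvert\eta_{t+1}^{-1}-\eta_{t}^{-1}\rvert\le \lvert\eta_{t+1}-\eta_t\rvert (t+1)/(\eta^{2}(1-\lambda)^{2})$; this is where the coefficient $\tfrac{5-4\lambda}{(1-\lambda)^{3}}$ on $\sqrt{T}$ and the constant $2\lambda-1$ come from. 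You instead decompose $\eta_{t,i}^{-1}$ into the monotone scale $\sqrt{t}/\eta$, which telescopes exactly, plus a geometrically decaying correction summed via $\sum_{t}t\lambda^{t}=\lambda/(1-\lambda)^{2}$. Both arguments rest on the same two facts (a monotone base rate and a perturbation decaying like $\lambda^{t}$), and both give $O(\sqrt{T})$ regret; your bookkeeping is in fact tighter, putting an $O(1)$ rather than $O((1-\lambda)^{-3})$ coefficient on $\sqrt{T}$, so it dominates into the stated bound. The only caveat is your claim that this accounting is "responsible for" the constants $(5-4\lambda)$ and $(2\lambda-1)$: it is not --- those arise from the paper's looser total-variation estimate --- so you should present your result as a smaller bound that is then majorized by the stated one, rather than expect to reproduce those constants literally.
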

It is easy to see that the regret of AdaRem is upper bounded by $O(\sqrt{T})$. Please see Appendix for details of the proof of convergence.

We end this section with a comparison to the previous work.
Using momentum to reduce oscillations is also found in Adam-like algorithms and SGD with Momentum. These methods directly use momentum to replace the gradient while our method considering momentum as a representation of the update trend of parameters.
AdaRem inspects whether the momentum is in the same direction with the current gradient for each parameter, so as to carefully adjust the learning rate. 

\begin{figure}[h] %[H]
\centering %图片全局居中
\begin{minipage}{0.47\textwidth} 
\centering %图片局部居中
\begin{algorithm}[H]
  \caption{AdaRem Algorithm}
  \label{alg1}
  \begin{algorithmic}
  \REQUIRE learning rate $\hat{\eta_{t}}$ at each iteration, momentum parameter $\beta$, iteration number $T$, weight decay factor $\gamma$, $m_0=0$, $\lambda$.
  \FOR{$t=0$ to $T$}
  \STATE $\boldsymbol{g}_{t}=\nabla_{\boldsymbol{\theta}} f_{t}\left(\boldsymbol{\theta}_{t}\right)$
  \STATE $\eta_{t,i}=\left(1+\lambda^t\frac{{g}_{t,i} \times {m}_{t,i}}{\left|{g}_{t,i}\right| \max \left|\boldsymbol{m}_{t}\right|+\epsilon}\right) \hat{\eta_{t}}$
  \STATE $\boldsymbol{\theta}_{t+1}=\Pi_{\mathcal{F}, \mathrm{diag}(\eta_t^{-1})}(\boldsymbol{\theta}_{t}-\boldsymbol{\eta}_{t} \odot \boldsymbol{g}_{t}-\hat{\eta_{t}} \gamma \boldsymbol{\theta}_{t} )$
  \STATE $\boldsymbol{m}_{t+1}=\beta \boldsymbol{m}_{t}+(1-\beta) \boldsymbol{g}_{t}$
  \ENDFOR
  \end{algorithmic}
\end{algorithm}
\end{minipage} \hfill
\begin{minipage}{0.47\textwidth} %所有minipage宽度之和要小于1，否则会自动变成竖排
\centering %图片局部居中
\captionof{table}{Final test accuracy of various networks on the ImageNet dataset. The bold number indicates the best result.}
\label{tabel2}
\begin{tabular}[H]{ccc}
\toprule
\multirow{2}{*}{Model} & \multicolumn{2}{c}{Top-1 Accuracy($\%$)} \\ \cline{2-3} 
                       & SGDM          & AdaRem-S                \\ \hline
ResNet50               & \textbf{76.18}   & 76.10                    \\
ResNet18               & 70.67         & \textbf{70.89}                    \\\hline
MobileNetV2-1.0        & 70.71         & \textbf{71.71}                    \\ 
MobileNetV2-0.5        & 62.99         & \textbf{64.01}                    \\\hline
ShuffleNetV2-1.0       & 67.37         & \textbf{68.33}                    \\ 
ShuffleNetV2-0.5       & 57.75         & \textbf{60.15}                    \\ 
\bottomrule
\end{tabular}

\end{minipage} 

\end{figure}

\subsection{AdaRem-S}
For networks with Batch Normalization layer or BN \citep{ioffe2015batch}, all the parameters before BN layer satisfy the property of \textbf{Scale Invariance} \citep{li2019exponential}: If for any $c \in \mathbb{R}^{+}, L(\boldsymbol{\theta})=L(c \boldsymbol{\theta})$, then
\begin{enumerate}%[leftmargin=*]
\item[(1)]$\left\langle\nabla_{\boldsymbol{\theta}} L, \boldsymbol{\theta}\right\rangle=0$
\item[(2)]$\left.\nabla_{\boldsymbol{\theta}} L\right|_{\boldsymbol{\theta}=\boldsymbol{\theta}_{0}}=\left.c \nabla_{\boldsymbol{\theta}} L\right|_{\boldsymbol{\theta}=c \boldsymbol{\theta}_{0}}$ for any $c>0$
\end{enumerate}

We use the moving average of the gradient to represent the update trend. However, the length of the parameter vector changes while training due to weight decay. In terms of the network containing the BN layer, changing the length of the parameters vector will affect the length of the gradient vector due to the scale invariance, hence makes $\boldsymbol{m}_{t}$ not a good representative of the update trend in the past.

Based on scale invariance, \citet{li2019exponential} proves that weight decay can be seen as an exponentially increasing learning rate schedule. This equivalence holds for BN, which is ubiquitous and provides benefits in optimization and generalization across all standard architectures. This means that weight decay is redundant in training, and thus we can use a learning rate schedule to achieve the same effect. Furthermore, we can fix the length of the parameter vector during the training and optimize the neural network on the sphere, eliminating the influence of the change of the parameter vector's length on the estimation of the update trend. See algorithm 2 for the pseudo-code of our proposed spherical AdaRem algorithm(AdaRem-S).

For optimization algorithms, the learning rate is a very critical hyper-parameter. Then what kinds of learning rate scheduler should be used on the sphere? First of all, we introduce the equivalent learning rate for spherical stochastic gradient descent: (1) We use the exponential learning rate from \citet{li2019exponential} to replace the weight decay; (2) The equivalent learning rate on the sphere is obtained by using scale invariance.

\begin{figure}[h] %[H]
\centering
\begin{minipage}{0.47\textwidth} 
\centering 
\includegraphics[width=0.45\textwidth]{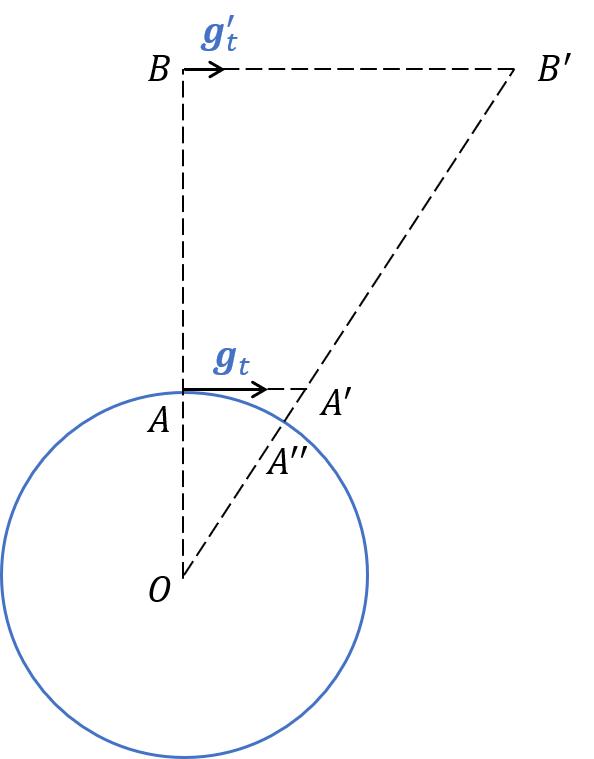} 
\vspace{0.1cm}
\caption{Schematic diagram of spherically constrained optimization.
The length of the solid arrows shows the magnitude of the gradient vectors. 
According to \textbf{Scale Invariance}, network at $A$ and network at $B$ are 
equivalent because they have the same output for any input. Similarly, 
network at ${A}^{'}$ and network at ${B}^{'}$ are equivalent. 
In addition, $\boldsymbol{g}_{t}$ is larger than $\boldsymbol{g}_{t}^{'}$ 
and they are both perpendicular to $OB$.
After the update, $A^{'}$ is projected onto the point $A^{''}$ on the sphere.} 
\label{Fig.mtd2.0} 
\end{minipage} \hfill
\begin{minipage}{0.47\textwidth} 
\centering 
\begin{algorithm}[H]
  \caption{AdaRem-S Algorithm}
  \label{alg2}
  \begin{algorithmic}
  \REQUIRE learning rate $\eta_{t}$ at each iteration, momentum parameter $\beta$, iteration number $T$, initial parameters $\boldsymbol{\theta}_{0}$, sphere radius $R$, weight decay factor $\gamma$, $m_0=0$.
  \STATE $\widehat{\boldsymbol{\theta}}_{0}=\frac{\boldsymbol{\theta}_{0}}{\left\|\boldsymbol{\theta}_{0}\right\|} R$
  \FOR{$t=0$ to $T$}
  \STATE $\boldsymbol{g}_{t}=\nabla_{\boldsymbol{\theta}} f_{t}\left(\hat{\boldsymbol{\theta}}_{t}\right)$
  
  \STATE $\eta_{t,i}=\left(1+\lambda^t\frac{{g}_{t,i} \times {m}_{t,i}}{\left|{g}_{t,i} \right|\max \left|\boldsymbol{m}_{t}\right|+\epsilon}\right) \hat{\eta_{t}}$
  \STATE $\boldsymbol{\theta}_{t+1}=\Pi_{\mathcal{F}, \mathrm{diag}(\eta_t^{-1})}(\hat{\boldsymbol{\theta}}_{t}-\boldsymbol{\eta}_{t} \odot \boldsymbol{g}_{t}-\hat{\eta_{t}} \gamma \hat{\boldsymbol{\theta}}_{t} )$
  \STATE $\boldsymbol{m}_{t+1}=\beta \boldsymbol{m}_{t}+(1-\beta) \boldsymbol{g}_{t}$
  \STATE $\hat{\boldsymbol{\theta}}_{t+1}=\frac{\boldsymbol{\theta}_{t+1}}{\left\|\boldsymbol{\theta}_{t+1}\right\|} R$
  \ENDFOR
  \end{algorithmic}
\end{algorithm}
\end{minipage} 
\end{figure}

\subsubsection{The appropriate learning rate on the sphere}
After constraining the optimization of the neural network to the
sphere, a simple method called SLR(sphere learning rate) is proposed to find the equivalent learning rate on the sphere for SGD.
The parameter vector of the network corresponds to a point in the high dimensional space. Therefore, the term of point in high dimensional space appearing in the following text is a neural network. As shown in Figure \ref{Fig.mtd2.0}, suppose that there is a network at point $B$ and the equivalent network of $B$ on the sphere is at point $A$. According to \textbf{Scale Invariance}, If \(OB=\alpha_{t} OA,\) then \(\boldsymbol{g}_{t}=\alpha_{t} \boldsymbol{g}_{t}^{\prime}.\)

To make the network on the sphere and the network in Euclidean space equivalent everywhere in the training process, it is required that the corresponding points of the two networks are on the same ray from the origin after updating, that is, the network at point $B$ should move to $B^{\prime}$, and the network at point $A$ should move to $A^{\prime}$. According to the similar triangle theorem, \[B B^{\prime}=\alpha_t A A^{\prime}.\] Assuming the learning rates of networks at point $A$ and point $B$ are $\eta_t$ and $\eta_t^{\prime}$, respectively, then\[\eta_{t}^{\prime} \boldsymbol{g_{t}}^{\prime}=\alpha_{t} \eta_{t} \boldsymbol{g_{t}}.\] We obtain the equivalent learning rate on the sphere: \(\eta_{t}=\frac{1}{\alpha^{2}_{t}} \eta^{\prime}_{t}\). Using the similar triangle theorem, $\alpha_{t}$ can be got sequentially as the training goes on. Empirically, we found that it is also working to use the SLR algorithm to obtain an appropriate learning rate on the sphere for AdaRem. Therefore, for our AdaRem-S method, we simply use SLR to obtain the sphere learning rate while training.

\section{Experiments}

\subsection{CONVOLUTIONAL NEURAL NETWORK}
\subsubsection{Dataset and hyper-parameters tuning}
We test our optimizers on ImageNet dataset which contains roughly 1.28 million training images and 50000 validation images with 1000 categories. To our knowledge, it is particularly challenging for adaptive optimizers to outperform SGD on this large dataset. We show that for large scale dataset, our proposed algorithms still enjoy a fast convergence rate, while the unseen data performance of AdaRem-S outperforms SGDM in small networks and much better than existing adaptive optimizers such as Adam, AdaBound and RMSProp.

A learning rate scheduler is crucial to the training. \citet{he2019bag} reports that the cosine learning rate outperforms the step decay learning rate for vision tasks. Therefore, we run all experiments with cosine learning rates without a warmup stage and train for 100 epochs with a minibatch size of 1024 on 16 GPUS. We set the base learning rate of 0.4 for SGD, AdaRem and AdaRem-S, 0.004 for Adam, AdamW and AdaBound, 0.0001 for RMSProp. Empirically we set $\lambda$ of 0.999 for AdaRem and AdaRem-S and just multiplying $\lambda$ once per epoch is enough.
We perform grid searches to choose the best hyper-parameters for all algorithms, additional details can be seen in the Appendix.

\subsubsection{Adaptive optimizers' performance}
We train a ResNet-18 \citep{he2016deep} model on ImageNet with our AdaRem and several commonly used optimizers, including: (1) SGD with momentum(SGDM) \citep{sutskever2013importance}, (2) Adam \citep{kingma2014adam}, (3) AdamW \citep{loshchilov2017decoupled}, (4) AdaBound \citep{luo2019adaptive} and (5) RMSprop \citep{riedmiller1992rprop}. As seen in Figure \ref {Fig.1} and Table \ref{tabel1}, AdamW, AdaBound, RMSProp appear to perform better than SGDM early in training. But at the end of the training, they all have poorer performance on the test set than SGDM. As for our method, AdaRem converges almost fastest and performs as well as SGDM on the test set at the end of training while achieves a significantly minimum train loss.     

\begin{figure}[h]%[H]
\centering 
\subfigure[Train Loss for ResNet18]{
\label{Fig1.sub.1}
\includegraphics[width=0.32\textwidth]{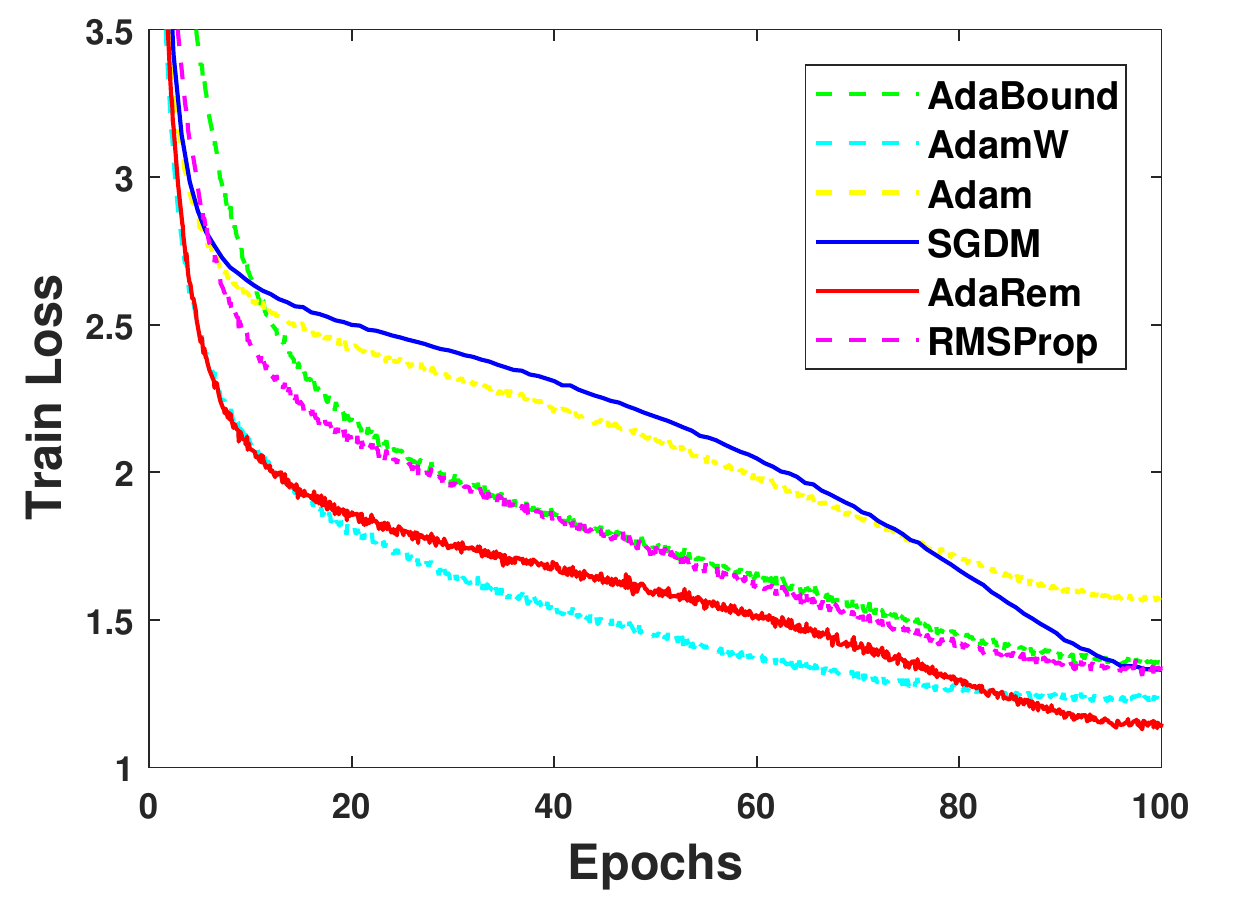}}
\subfigure[Top-1 Error for ResNet18]{
\label{Fig1.sub.2}
\includegraphics[width=0.32\textwidth]{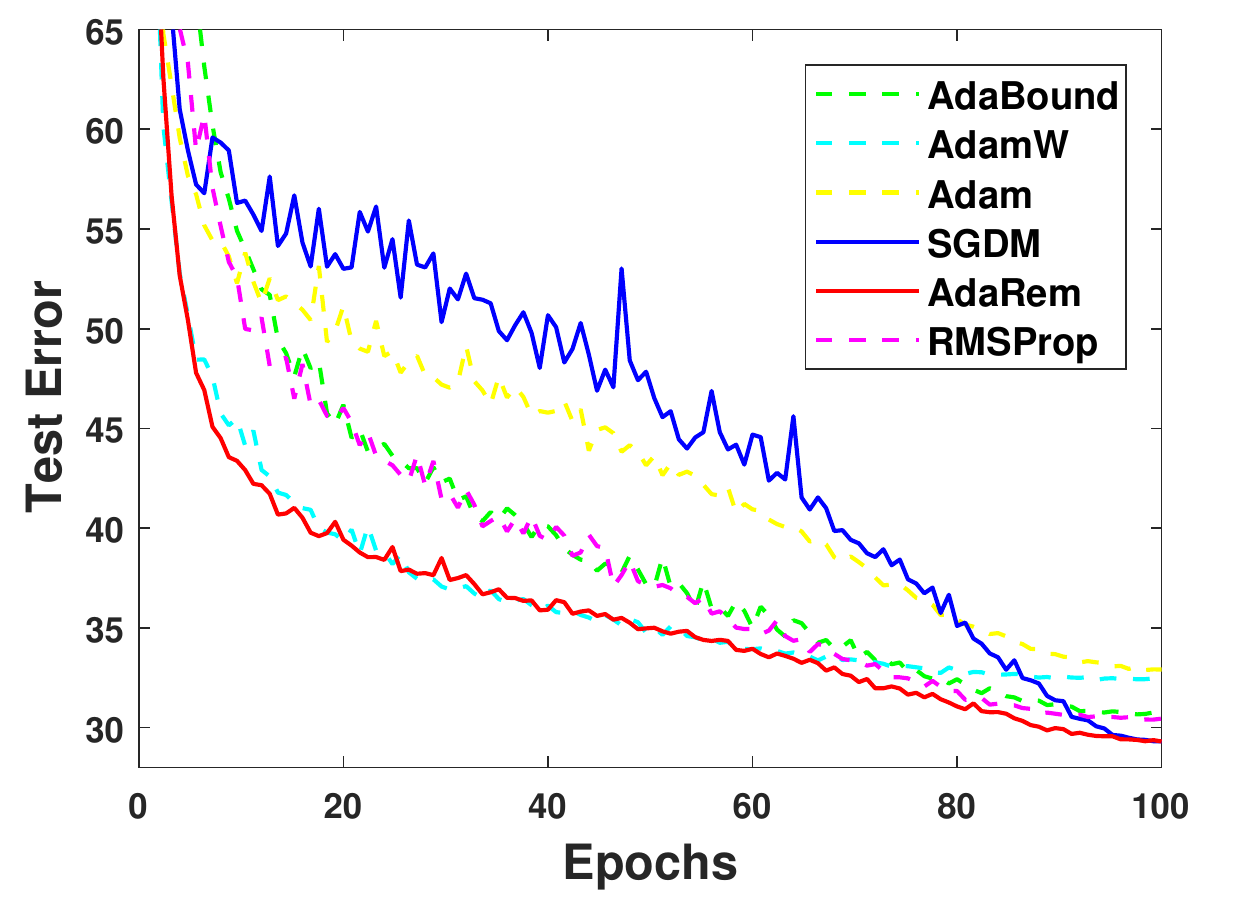}}
\subfigure[Top-5 Error for ResNet18]{
\label{Fig1.sub.3}
\includegraphics[width=0.32\textwidth]{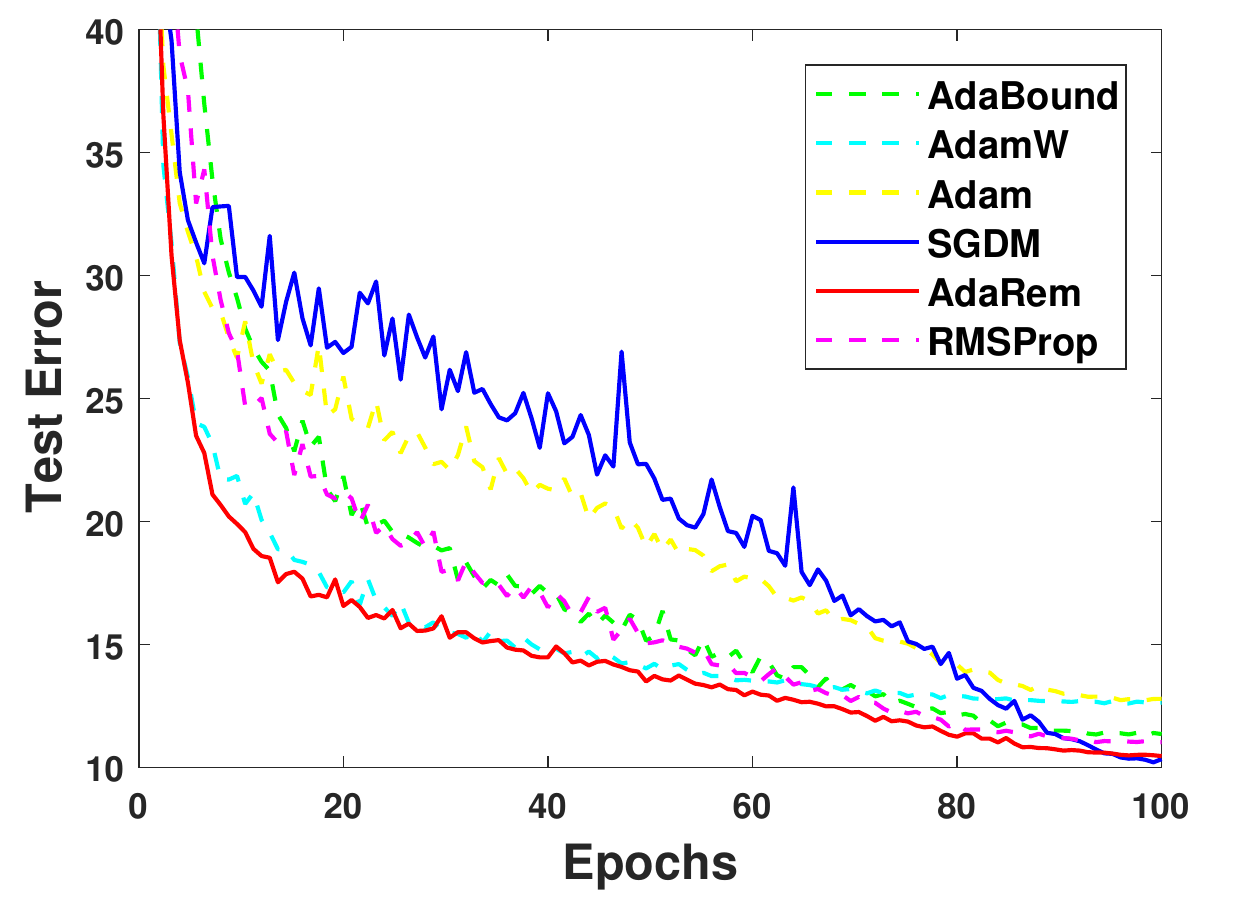}}
\caption{Train loss and test error of ResNet18 on ImageNet. Adam, AdamW, AdaBound and RMSProp have fast progress in the early stages, but their performance quickly enters a period of slow growth. AdaRem achieves the fastest training speed among all methods and performs as well as SGDM on the test set.} 
\label{Fig.1}
\end{figure}

\begin{table}[]
\caption{Final test accuracy of all algorithms on the ImageNet dataset. The bold number indicates the best result.}
\label{tabel1}
\centering
\begin{tabular}{ccccccc}
\toprule
\multirow{2}{*}{Models} & \multicolumn{6}{c}{Test Accuracy($\%$)}                        \\ \cline{2-7} 
                        & SGDM    & Adam               & AdamW         & AdaBound & RMSprop   & AdaRem \\ \hline
ResNet-18 Top-1          & \textbf{70.67}   & 67.07     &  67.55     & 69.3    & 69.63   & \textbf{70.67}   \\
ResNet-18 Top-5          & \textbf{89.74}   & 87.24  &  87.37  & 88.7    & 88.94   & 89.51          \\
\bottomrule
\end{tabular}
\end{table}
\subsubsection{AdaRem-S vs. SGDM across various architectures}
AdaRem-S eliminates the influence of the change of the parameter vector's length on the estimation of momentum. Here we compare AdaRem-S against SGDM on various networks, including ResNet-18, ResNet-50 \citep{he2016deep}, MobileNetV2 \citep{sandler2018mobilenetv2} and ShuffleNetV2 \citep{ma2018shufflenet}. 
\paragraph{ResNet} 
Results for this experiment is shown in Figure \ref {Fig.2}. As expected, AdaRem-S makes rapid progress lowing train loss at the early stage of the training process and finally performs as well as SGDM for ResNet-18 and ResNet-50. From Table \ref{tabel1} and Table \ref{tabel2}, we can see that AdaRem-S performance better than AdaRem in terms of the test accuracy for ResNet-18.
\paragraph{MobileNetV2 and ShuffleNetV2}
As we can see in Figure \ref {Fig.2} and Table \ref{tabel2}, while enjoying a fast convergence rate, AdaRem-S improves the top-1 accuracy by 1.0\% for MobileNetV2 and ShuffleNetV2-1.0. What's more, ShuffleNetV2-0.5 gets a surprising large gain (\textbf{2.4\%}) when using AdaRem-S. Note that AdaRem-S achieves more improvement for smaller models(e.g. MobileNetV2, ShuffleNetV2). This is because the smaller models are more under fitted, and AdaRem-S significantly improves their fitting ability.

\subsection{RECURRENT NEURAL NETWORK}
Finally, to verify the generalization of our optimizer, we trained a Long Short-Term Memory
(LSTM) network \citep{hochreiter1997long} for language modeling task on the Penn Treebank dataset. As BN is not available on a recurrent neural network, we just conduct experiments with AdaRem. We followed the model setup of \citet{merity2017regularizing} and made use of their publicly available code in our experiments. We trained all models for 100 epochs and divided the learning rate by 10 in 50th epoch. For hyper-parameters such as learning rate $\eta$ and parameter $\beta_1$ and $\beta_2$, we performed grid searches to choose the best one. As shown in Table~\ref{tab-lstm}, our method AdaRem has the smallest perplexity value.

\begin{figure}[h]%[H]
\centering 
\subfigure[Train Loss for MobileNetV2]{
\label{Fig2.sub.1}
\includegraphics[width=0.32\textwidth]{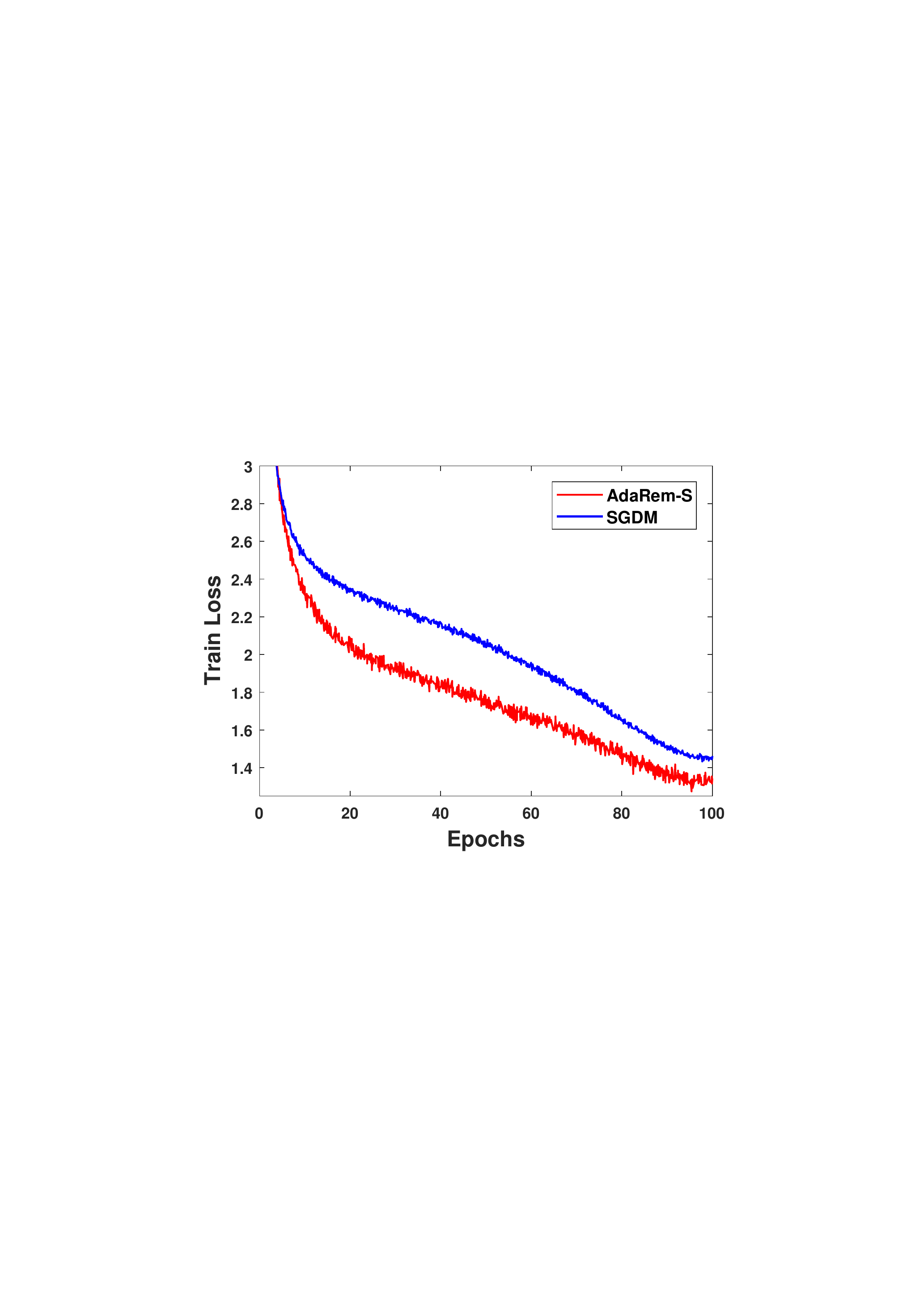}}
\subfigure[Train Loss for ResNet18]{
\label{Fig2.sub.4}
\includegraphics[width=0.32\textwidth]{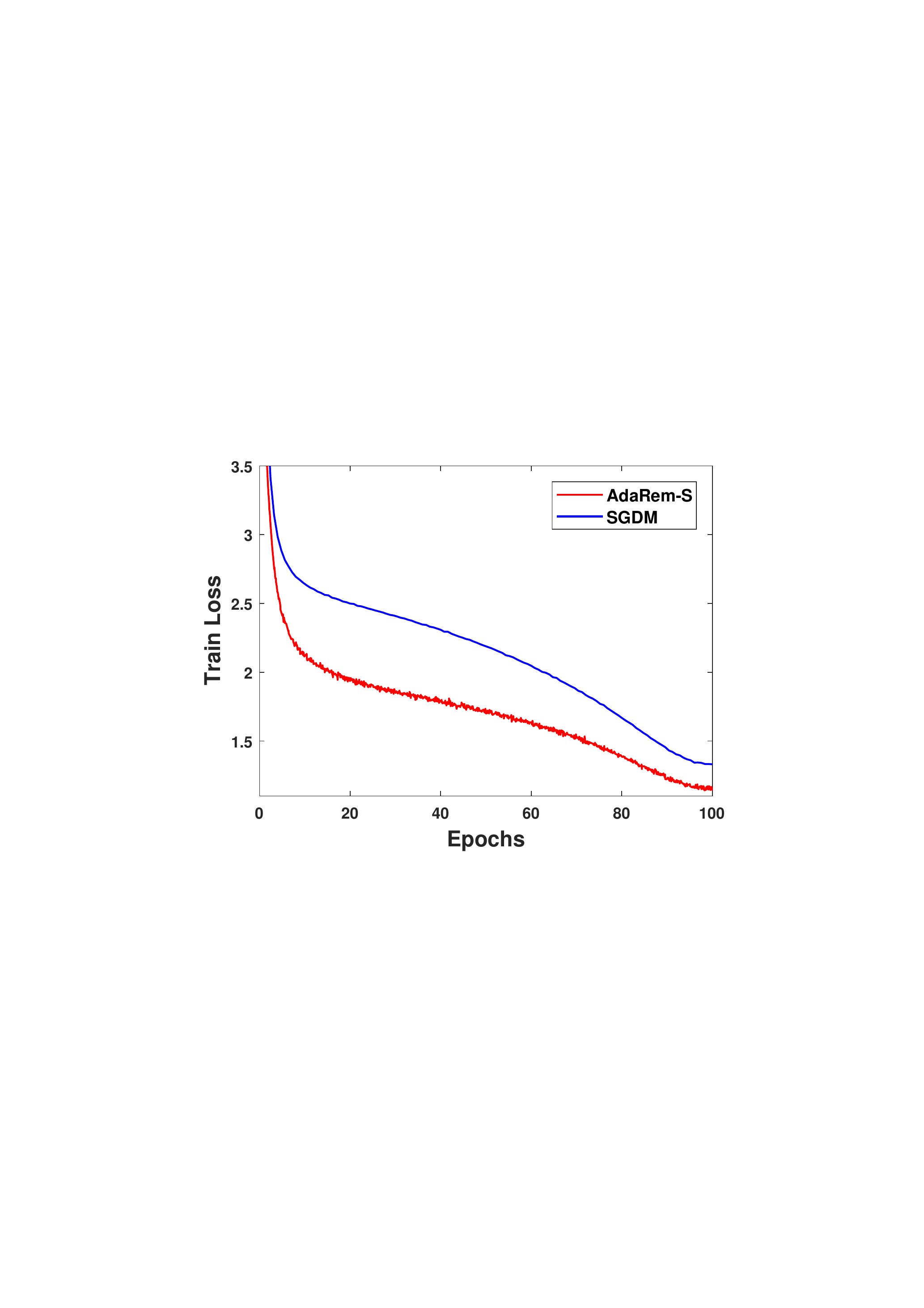}}
\subfigure[Train Loss for ResNet50]{
\label{Fig2.sub.7}
\includegraphics[width=0.32\textwidth]{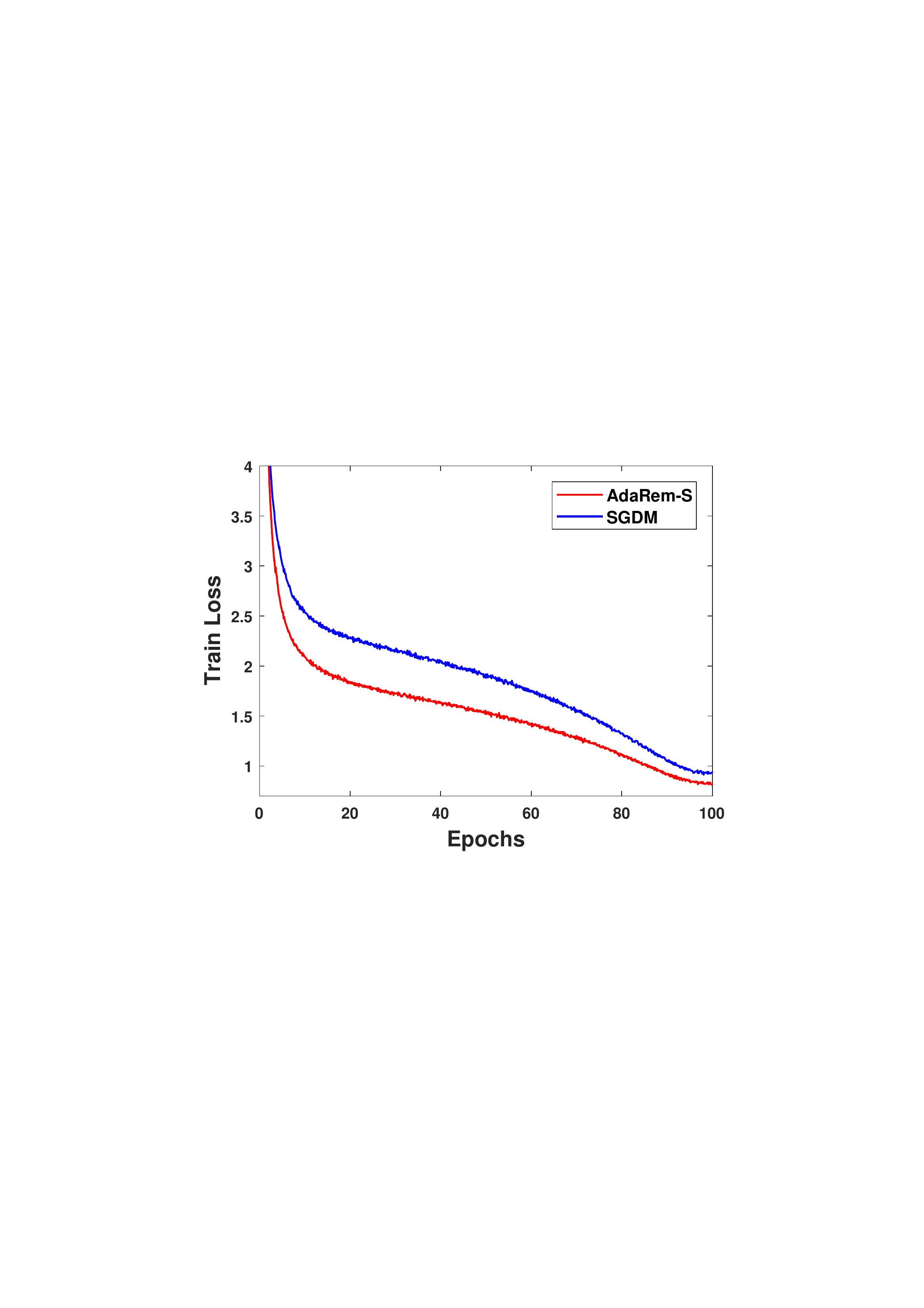}}
\subfigure[Top-1 Error for MobileNetV2]{
\label{Fig2.sub.2}
\includegraphics[width=0.32\textwidth]{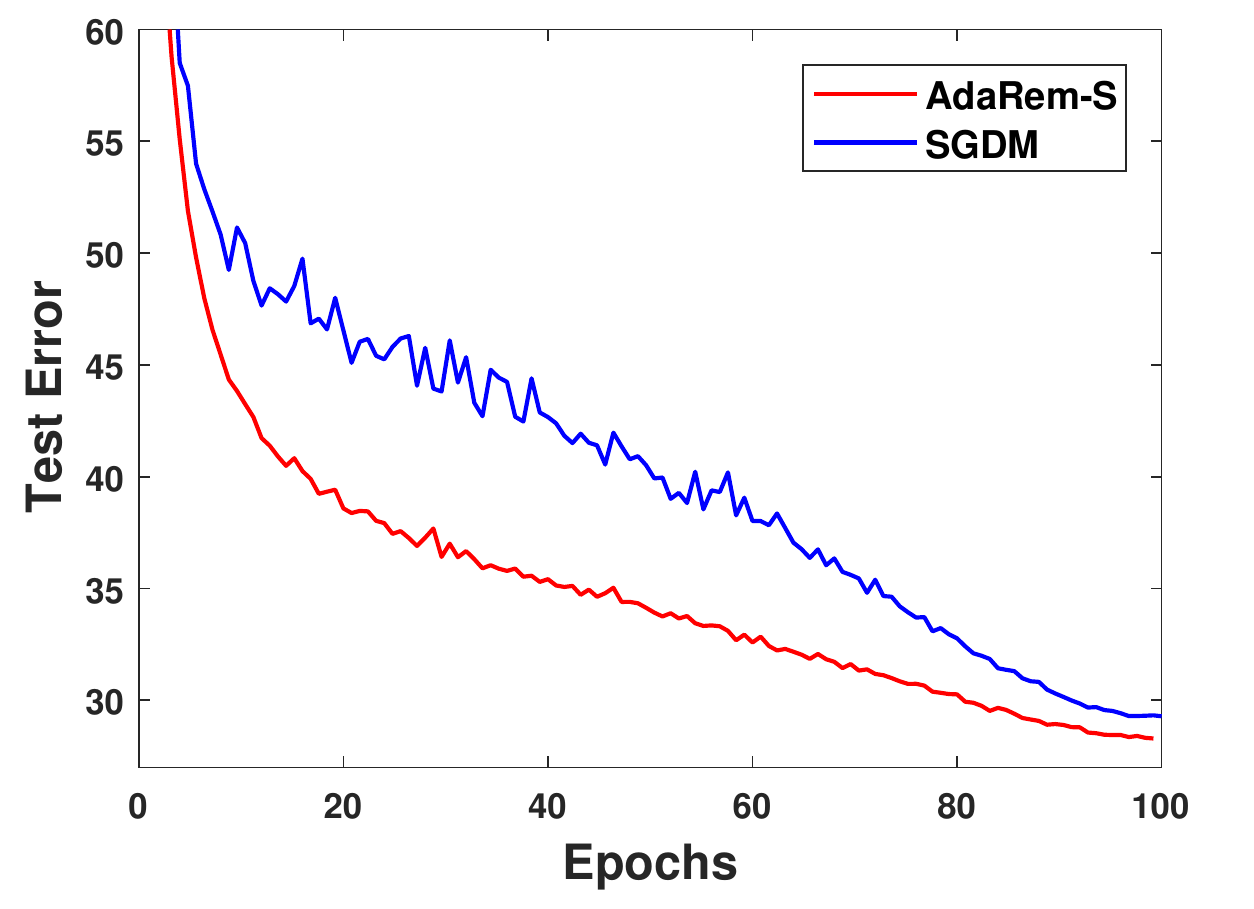}}
\subfigure[Top-1 Error for ResNet18]{
\label{Fig2.sub.5}
\includegraphics[width=0.32\textwidth]{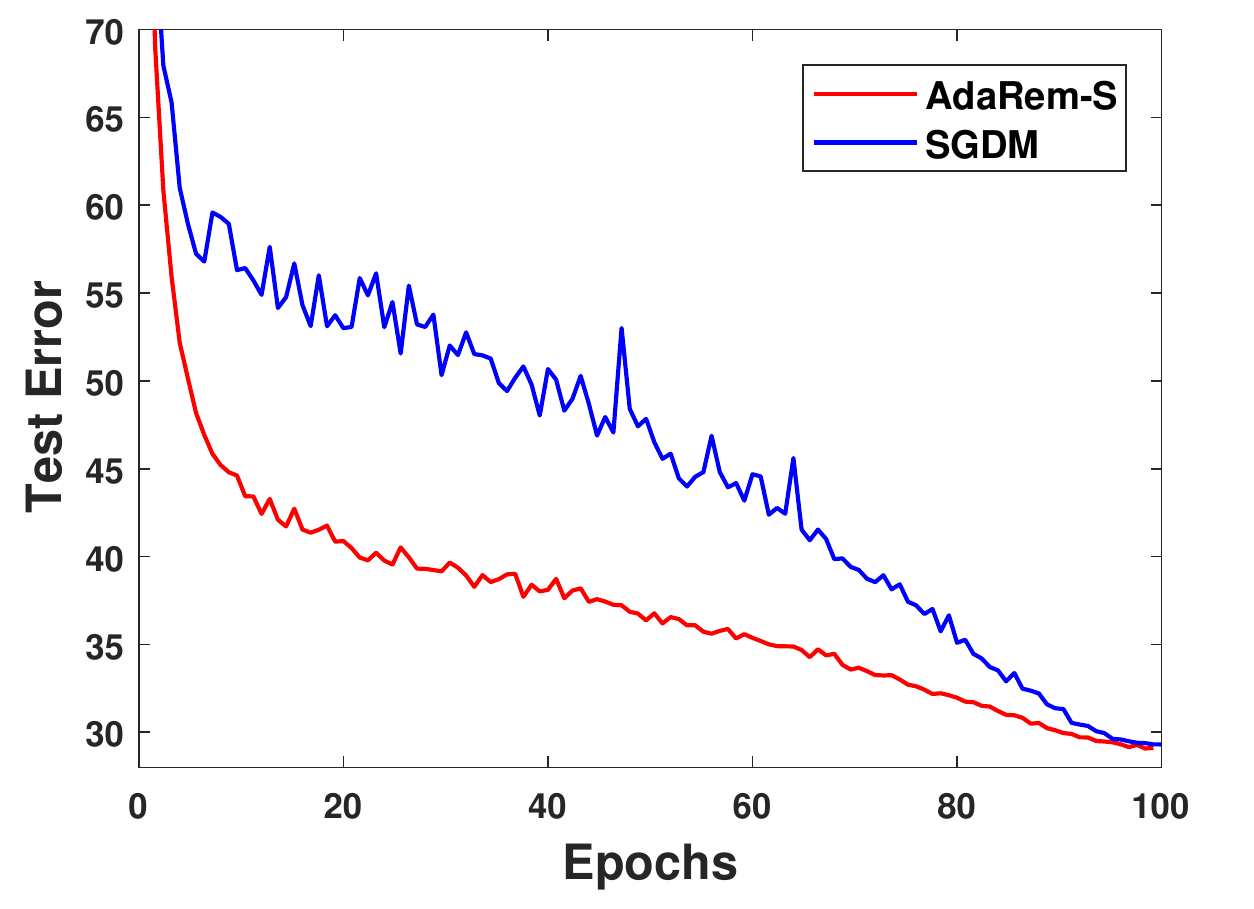}}
\subfigure[Top-1 Error for ResNet50]{
\label{Fig2.sub.8}
\includegraphics[width=0.32\textwidth]{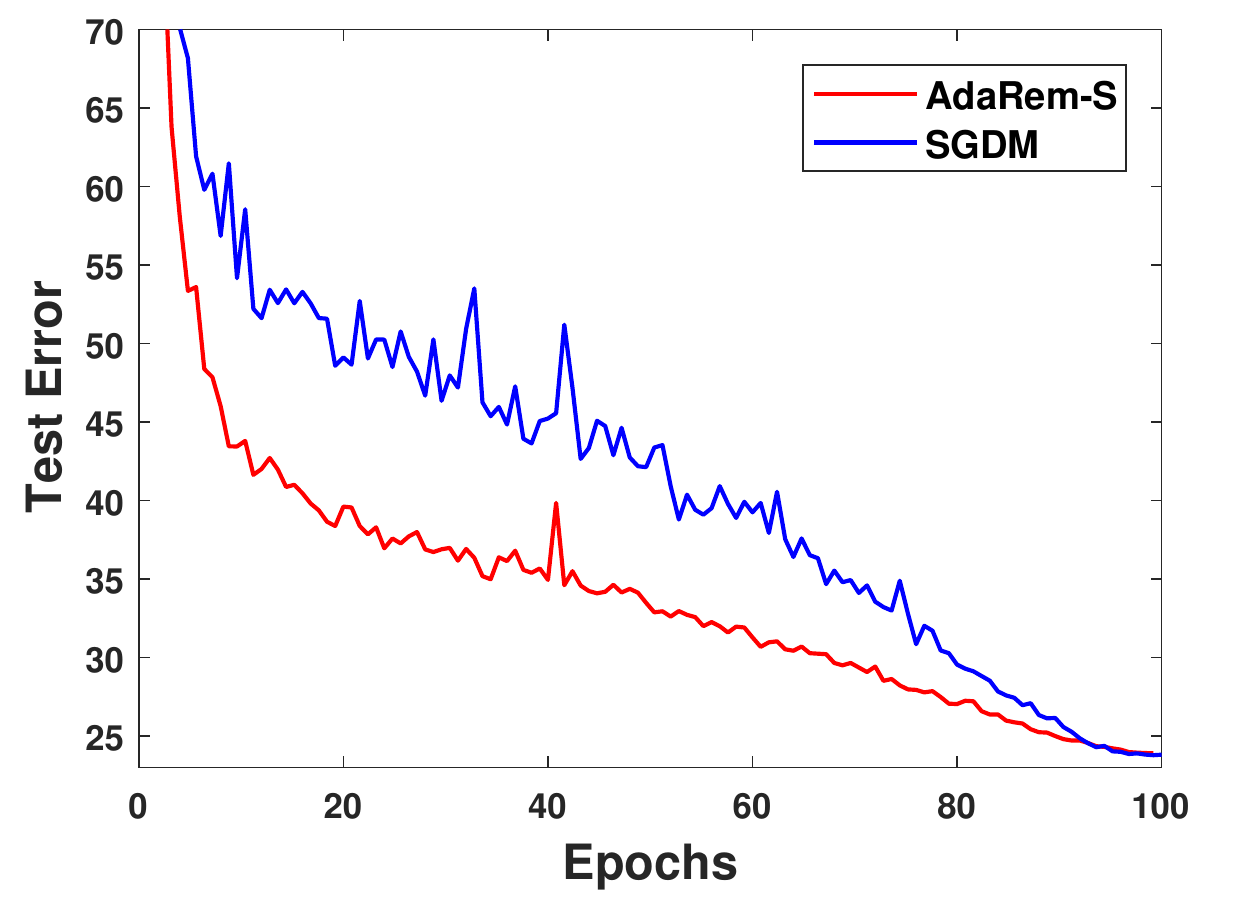}}

\caption{Train loss and test error of three networks on ImageNet. Compared with SGDM, AdaRem-S significantly reduces the training loss across all three networks. It generalizes as well as SGDM on ResNet18 and ResNet50 and brings considerable improvement over SGDM on MobileNetV2.}
\label{Fig.2}
\end{figure}

\begin{table}[]
\caption{Test perplexity of LSTM models on the Penn Treebank dataset. A lower value is better.}
\label{tab-lstm}
\centering
\begin{tabular}{llll}
\toprule
Optimizers & SGDM  & Adam  & AdaRem         \\ \hline
Perplexity & 71.56 & 70.16 & \textbf{69.13} \\ 
\bottomrule
\end{tabular}
\end{table}

\section{Conclusion}
In this paper, we propose AdaRem and its spherical version called AdaRem-S. By changing the learning rate according to whether the direction of the current gradient is the same as the direction of parameter change in the past, these algorithms can accelerate SGD and dampen oscillations more efficiently than SGDM. The experiments show that AdaRem and AdaRem-S can maintain a fast convergence rate while performing as well as SGDM on the unseen data. In particular, AdaRem-S achieves better test performance than SGDM on MobileNetV2 and ShuffleNetV2.

\clearpage  
\bibliography{iclr2021_conference}

\begin{thebibliography}{30}
\providecommand{\natexlab}[1]{#1}
\providecommand{\url}[1]{\texttt{#1}}
\expandafter\ifx\csname urlstyle\endcsname\relax
  \providecommand{\doi}[1]{doi: #1}\else
  \providecommand{\doi}{doi: \begingroup \urlstyle{rm}\Url}\fi

\bibitem[Chen \& Gu(2018)Chen and Gu]{chen2018closing}
Jinghui Chen and Quanquan Gu.
\newblock Closing the generalization gap of adaptive gradient methods in
  training deep neural networks.
\newblock \emph{arXiv preprint arXiv:1806.06763}, 2018.

\bibitem[Dai et~al.(2016)Dai, He, Li, Ren, and Sun]{dai2016instance}
Jifeng Dai, Kaiming He, Yi~Li, Shaoqing Ren, and Jian Sun.
\newblock Instance-sensitive fully convolutional networks.
\newblock In \emph{European Conference on Computer Vision}, pp.\  534--549.
  Springer, 2016.

\bibitem[Deng et~al.(2009)Deng, Dong, Socher, Li, Li, and
  Fei-Fei]{deng2009imagenet}
Jia Deng, Wei Dong, Richard Socher, Li-Jia Li, Kai Li, and Li~Fei-Fei.
\newblock Imagenet: A large-scale hierarchical image database.
\newblock In \emph{2009 IEEE conference on computer vision and pattern
  recognition}, pp.\  248--255. Ieee, 2009.

\bibitem[Dozat(2016)]{dozat2016incorporating}
Timothy Dozat.
\newblock Incorporating nesterov momentum into adam.
\newblock 2016.

\bibitem[Duchi et~al.(2011)Duchi, Hazan, and Singer]{duchi2011adaptive}
John Duchi, Elad Hazan, and Yoram Singer.
\newblock Adaptive subgradient methods for online learning and stochastic
  optimization.
\newblock \emph{Journal of machine learning research}, 12\penalty0
  (Jul):\penalty0 2121--2159, 2011.

\bibitem[He et~al.(2016)He, Zhang, Ren, and Sun]{he2016deep}
Kaiming He, Xiangyu Zhang, Shaoqing Ren, and Jian Sun.
\newblock Deep residual learning for image recognition.
\newblock In \emph{Proceedings of the IEEE conference on computer vision and
  pattern recognition}, pp.\  770--778, 2016.

\bibitem[He et~al.(2019)He, Zhang, Zhang, Zhang, Xie, and Li]{he2019bag}
Tong He, Zhi Zhang, Hang Zhang, Zhongyue Zhang, Junyuan Xie, and Mu~Li.
\newblock Bag of tricks for image classification with convolutional neural
  networks.
\newblock In \emph{Proceedings of the IEEE Conference on Computer Vision and
  Pattern Recognition}, pp.\  558--567, 2019.

\bibitem[Hochreiter \& Schmidhuber(1997)Hochreiter and
  Schmidhuber]{hochreiter1997long}
Sepp Hochreiter and J{\"u}rgen Schmidhuber.
\newblock Long short-term memory.
\newblock \emph{Neural computation}, 9\penalty0 (8):\penalty0 1735--1780, 1997.

\bibitem[Hu et~al.(2018)Hu, Shen, and Sun]{hu2018squeeze}
Jie Hu, Li~Shen, and Gang Sun.
\newblock Squeeze-and-excitation networks.
\newblock In \emph{Proceedings of the IEEE conference on computer vision and
  pattern recognition}, pp.\  7132--7141, 2018.

\bibitem[Huang et~al.(2018)Huang, Wang, and Dong]{huang2018nostalgic}
Haiwen Huang, Chang Wang, and Bin Dong.
\newblock Nostalgic adam: Weighting more of the past gradients when designing
  the adaptive learning rate.
\newblock \emph{arXiv preprint arXiv:1805.07557}, 2018.

\bibitem[Ioffe \& Szegedy(2015)Ioffe and Szegedy]{ioffe2015batch}
Sergey Ioffe and Christian Szegedy.
\newblock Batch normalization: Accelerating deep network training by reducing
  internal covariate shift.
\newblock \emph{arXiv preprint arXiv:1502.03167}, 2015.

\bibitem[Kingma \& Ba(2014)Kingma and Ba]{kingma2014adam}
Diederik~P Kingma and Jimmy Ba.
\newblock Adam: A method for stochastic optimization.
\newblock \emph{arXiv preprint arXiv:1412.6980}, 2014.

\bibitem[Krizhevsky et~al.(2009)Krizhevsky, Hinton,
  et~al.]{krizhevsky2009learning}
Alex Krizhevsky, Geoffrey Hinton, et~al.
\newblock Learning multiple layers of features from tiny images.
\newblock 2009.

\bibitem[Krogh \& Hertz(1992)Krogh and Hertz]{krogh1992simple}
Anders Krogh and John~A Hertz.
\newblock A simple weight decay can improve generalization.
\newblock In \emph{Advances in neural information processing systems}, pp.\
  950--957, 1992.

\bibitem[Li \& Arora(2019)Li and Arora]{li2019exponential}
Zhiyuan Li and Sanjeev Arora.
\newblock An exponential learning rate schedule for deep learning.
\newblock \emph{arXiv preprint arXiv:1910.07454}, 2019.

\bibitem[Loshchilov \& Hutter(2017)Loshchilov and
  Hutter]{loshchilov2017decoupled}
Ilya Loshchilov and Frank Hutter.
\newblock Decoupled weight decay regularization.
\newblock \emph{arXiv preprint arXiv:1711.05101}, 2017.

\bibitem[Luo et~al.(2019)Luo, Xiong, Liu, and Sun]{luo2019adaptive}
Liangchen Luo, Yuanhao Xiong, Yan Liu, and Xu~Sun.
\newblock Adaptive gradient methods with dynamic bound of learning rate.
\newblock \emph{arXiv preprint arXiv:1902.09843}, 2019.

\bibitem[Ma et~al.(2018)Ma, Zhang, Zheng, and Sun]{ma2018shufflenet}
Ningning Ma, Xiangyu Zhang, Hai-Tao Zheng, and Jian Sun.
\newblock Shufflenet v2: Practical guidelines for efficient cnn architecture
  design.
\newblock In \emph{Proceedings of the European conference on computer vision
  (ECCV)}, pp.\  116--131, 2018.

\bibitem[McMahan \& Streeter(2010)McMahan and Streeter]{mcmahan2010adaptive}
H~Brendan McMahan and Matthew Streeter.
\newblock Adaptive bound optimization for online convex optimization.
\newblock \emph{arXiv preprint arXiv:1002.4908}, 2010.

\bibitem[Merity et~al.(2017)Merity, Keskar, and Socher]{merity2017regularizing}
Stephen Merity, Nitish~Shirish Keskar, and Richard Socher.
\newblock Regularizing and optimizing lstm language models.
\newblock \emph{arXiv preprint arXiv:1708.02182}, 2017.

\bibitem[Reddi et~al.(2019)Reddi, Kale, and Kumar]{reddi2019convergence}
Sashank~J Reddi, Satyen Kale, and Sanjiv Kumar.
\newblock On the convergence of adam and beyond.
\newblock \emph{arXiv preprint arXiv:1904.09237}, 2019.

\bibitem[Riedmiller \& Braun(1992)Riedmiller and Braun]{riedmiller1992rprop}
Martin Riedmiller and Heinrich Braun.
\newblock Rprop-a fast adaptive learning algorithm.
\newblock In \emph{Proc. of ISCIS VII), Universitat}. Citeseer, 1992.

\bibitem[Robbins \& Monro(1951)Robbins and Monro]{robbins1951stochastic}
Herbert Robbins and Sutton Monro.
\newblock A stochastic approximation method.
\newblock \emph{The annals of mathematical statistics}, pp.\  400--407, 1951.

\bibitem[Sandler et~al.(2018)Sandler, Howard, Zhu, Zhmoginov, and
  Chen]{sandler2018mobilenetv2}
Mark Sandler, Andrew Howard, Menglong Zhu, Andrey Zhmoginov, and Liang-Chieh
  Chen.
\newblock Mobilenetv2: Inverted residuals and linear bottlenecks.
\newblock In \emph{Proceedings of the IEEE conference on computer vision and
  pattern recognition}, pp.\  4510--4520, 2018.

\bibitem[Song et~al.(2020)Song, Liu, and Wang]{song2020revisiting}
Guanglu Song, Yu~Liu, and Xiaogang Wang.
\newblock Revisiting the sibling head in object detector.
\newblock \emph{arXiv preprint arXiv:2003.07540}, 2020.

\bibitem[Sutskever et~al.(2013)Sutskever, Martens, Dahl, and
  Hinton]{sutskever2013importance}
Ilya Sutskever, James Martens, George Dahl, and Geoffrey Hinton.
\newblock On the importance of initialization and momentum in deep learning.
\newblock In \emph{International conference on machine learning}, pp.\
  1139--1147, 2013.

\bibitem[Tieleman \& Hinton(2012)Tieleman and Hinton]{tieleman2012divide}
Tijmen Tieleman and G~Hinton.
\newblock Divide the gradient by a running average of its recent magnitude.
  coursera neural netw.
\newblock \emph{Mach. Learn}, 6, 2012.

\bibitem[Wilson et~al.(2017)Wilson, Roelofs, Stern, Srebro, and
  Recht]{wilson2017marginal}
Ashia~C Wilson, Rebecca Roelofs, Mitchell Stern, Nati Srebro, and Benjamin
  Recht.
\newblock The marginal value of adaptive gradient methods in machine learning.
\newblock In \emph{Advances in Neural Information Processing Systems}, pp.\
  4148--4158, 2017.

\bibitem[Zeiler(2012)]{zeiler2012adadelta}
Matthew~D Zeiler.
\newblock Adadelta: an adaptive learning rate method.
\newblock \emph{arXiv preprint arXiv:1212.5701}, 2012.

\bibitem[Zhou et~al.(2018)Zhou, Zhang, Lu, Wang, Zhang, and
  Yu]{zhou2018adashift}
Zhiming Zhou, Qingru Zhang, Guansong Lu, Hongwei Wang, Weinan Zhang, and Yong
  Yu.
\newblock Adashift: Decorrelation and convergence of adaptive learning rate
  methods.
\newblock \emph{arXiv preprint arXiv:1810.00143}, 2018.

\end{thebibliography}
\bibliographystyle{iclr2021_conference}

\clearpage  
\appendix
\section{proof of Theorem~\ref{the:AdaRem}}

\begin{lemma}[\citet{mcmahan2010adaptive}]
\label{lem1}
For any $Q \in \mathcal{S}_+^d$ and convex feasible set $\mathcal{F} \subset \R^d$, suppose $u_1 = \min_{x \in \mathcal{F}} \|Q^{1/2}(x-z_1)\|$ and $u_2 = \min_{x \in \mathcal{F}} \|Q^{1/2}(x-z_2)\|$ then we have $\|Q^{1/2}(u_1-u_2)\| \leq \|Q^{1/2}(z_1-z_2)\|$. 
\end{lemma}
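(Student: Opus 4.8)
The plan is to recognize Lemma~\ref{lem1} as the standard statement that projection onto a convex set under the $Q$-weighted norm is (firmly) non-expansive, and to prove it via the first-order optimality condition for constrained convex minimization. First I would clarify the notation: since $Q \in \mathcal{S}_+^d$ is positive definite, $Q^{1/2}$ exists, is symmetric positive definite, and the objective $x \mapsto \|Q^{1/2}(x-z_i)\|^2 = (x-z_i)^\top Q (x-z_i)$ is strictly convex, so each $u_i$ should be read as the (unique) minimizer $\argmin_{x \in \mathcal{F}} \|Q^{1/2}(x-z_i)\|$, i.e. the projection $\Pi_{\mathcal{F},Q}(z_i)$, rather than the scalar minimum value; in particular $u_1, u_2 \in \mathcal{F}$.

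The central tool is the variational inequality characterizing this projection. Since $u_i$ minimizes the differentiable convex function $\tfrac{1}{2}\|Q^{1/2}(x-z_i)\|^2$ over the convex set $\mathcal{F}$, and its gradient at $u_i$ equals $Q(u_i-z_i)$, the optimality condition reads $\langle Q(z_i - u_i),\, x - u_i\rangle \leq 0$ for every $x \in \mathcal{F}$. I would instantiate this twice: for $i=1$ with test point $x = u_2$, and for $i=2$ with test point $x = u_1$. Setting $w = u_1 - u_2$, these two inequalities become $\langle Q(z_1 - u_1), w\rangle \geq 0$ and $\langle Q(z_2 - u_2), w\rangle \leq 0$.

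Subtracting the second from the first and using $(z_1 - u_1) - (z_2 - u_2) = (z_1 - z_2) - w$ gives $\langle Q[(z_1 - z_2) - w], w\rangle \geq 0$, that is, $\|Q^{1/2} w\|^2 = \langle Qw, w\rangle \leq \langle Q(z_1 - z_2), w\rangle$. I would then apply the Cauchy--Schwarz inequality in the $Q$-inner product, $\langle Q(z_1-z_2), w\rangle = \langle Q^{1/2}(z_1-z_2), Q^{1/2}w\rangle \leq \|Q^{1/2}(z_1-z_2)\|\,\|Q^{1/2}w\|$, to obtain $\|Q^{1/2}w\|^2 \leq \|Q^{1/2}(z_1-z_2)\|\,\|Q^{1/2}w\|$. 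Dividing through by $\|Q^{1/2}w\|$ (the case $\|Q^{1/2}w\| = 0$ being trivial, as the claim then holds automatically) yields exactly $\|Q^{1/2}(u_1-u_2)\| \leq \|Q^{1/2}(z_1-z_2)\|$, which is the assertion.

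The argument involves no serious obstacle; the only point requiring care is deriving the variational inequality cleanly, which uses both the convexity of $\mathcal{F}$ (so the segment from $u_i$ toward any $x \in \mathcal{F}$ remains feasible, giving the inequality rather than an equality) and the positive definiteness of $Q$ (so $Q^{1/2}$ is symmetric and the identity $\langle Qa,b\rangle = \langle Q^{1/2}a, Q^{1/2}b\rangle$ holds, enabling the final Cauchy--Schwarz step). Since the lemma is cited from \citet{mcmahan2010adaptive}, one could alternatively just invoke it, but the self-contained proof above is short enough to present directly.
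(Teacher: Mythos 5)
Your proof is correct and follows essentially the same route as the paper's: the two projection-optimality (variational) inequalities tested at $u_2$ and $u_1$ respectively, combined to bound $\langle Q(u_1-u_2), u_1-u_2\rangle$ by the cross term $\langle Q(z_1-z_2), u_1-u_2\rangle$, which you then control via Cauchy--Schwarz in the $Q$-inner product while the paper uses the equivalent AM--GM step $\langle a, Qb\rangle \le \tfrac12\left(\langle a,Qa\rangle + \langle b,Qb\rangle\right)$. One remark worth making: your write-up is actually the corrected version of the argument, since the paper's transcription contains sign/typo slips --- its stated projection inequalities involve $Q(z_2-z_1)$ where $Q(u_2-u_1)$ is required, and the right-hand side of its combined inequality should read $\langle u_2-u_1, Q(u_2-u_1)\rangle$ rather than $\langle z_2-z_1, Q(z_2-z_1)\rangle$; taken literally, the paper's displayed chain would yield the reverse of the claimed bound, whereas your derivation is sound as written.
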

\begin{proof}
We provide the proof here for completeness. 
Since $u_{1} = min_{x \in \mathcal{F}} \left \| Q^{1/2}(x-z_{1}) \right \|$ and $u_{2} = min_{x\in \mathcal{F}} \left \| Q^{1/2}(x-z_{2}) \right \|$ and from the property of projection operator we have the following:
\[
    \left \langle z_{1} - u_{1}, Q(z_{2} - z_{1}) \right \rangle \geq 0  
    \text{ and }  
    \left \langle z_{2} - u_{2}, Q(z_{1} - z_{2}) \right \rangle \geq 0.
\]
Combining the above inequalities, we have
\begin{equation}
\label{lem:in}
\left \langle u_{2} - u_{1}, Q(z_{2} - z_{1}) \right \rangle \geq \left \langle z_{2} - z_{1}, Q(z_{2} - z_{1}) \right \rangle.
\end{equation}
Also, observe the following:
\[
\left \langle u_{2} - u_{1}, Q(z_{2} - z_{1}) \right \rangle \leq \frac{1}{2}\left [ \left \langle u_{2} - u_{1}, Q(u_{2} - u_{1}) \right \rangle + \left \langle z_{2} - z_{1}, Q(z_{2} - z_{1}) \right \rangle \right ].
\]
The above inequality can be obtained from the fact that 
\[
\left \langle (u_{2} - u_{1}) - (z_{2} - z_{1}), Q((u_{2} - u_{1}) - (z_{2} - z_{1})) \right \rangle \geq 0 \text{ as } Q \in \mathcal{S}_{+}^{d}
\]
and rearranging the terms. 
Combining the above inequality with \Eqref{lem:in}, we have the required the result.
\end{proof}

\begin{proof}
For simplicity, vectors is also denoted in common lowercase in the proof.
We begin with the following observation:
\[
    x_{t+1} = \Pi_{\mathcal{F}, \mathrm{diag}(\eta_t^{-1})} (x_t - \eta_t \odot g_t) = \min_{x \in \mathcal{F}} \|\eta_t^{-1/2} \odot (x -  (x_t - \eta_t \odot g_t))\|.
\]
Furthermore, as $\mathcal{F}$ is closed and convex, we can get $x^* = \argmin_{x \in \mathcal{F}} \sum_{t=1}^{T} f_t(x)$. Using Lemma~\ref{lem1} with $u_1 = x_{t+1}$ and $u_2 = x^*$, we have the following:
\begin{equation*}
\begin{split}
    \|\eta_t^{-1/2} \odot (x_{t+1} - x^*)\|^2
        &\leq \|\eta_t^{-1/2} \odot (x_t - \eta_t \odot g_t - x^*)\|^2 \\
        &= \|\eta_t^{-1/2} \odot (x_t - x^*)\|^2 + \|\eta_t^{1/2} \odot g_t\|^2 - 2\langle g_t, x_t - x^* \rangle.
\end{split}
\end{equation*}
Rearranging the above inequality, we have
\begin{equation}
\label{eq:inner-product-bound}
\begin{split}
    \langle g_t, x_t - x^* \rangle
        &\leq \frac{1}{2}\bigg[ \|\eta_t^{-1/2} \odot (x_t - x^*)\|^2 - \|\eta_t^{-1/2} \odot (x_{t+1} - x^*)\|^2 \bigg] + \frac{1}{2} \|\eta_t^{1/2} \odot g_t\|^2.
\end{split}
\end{equation}

We now use the standard approach of bounding the regret at each step using convexity of the function $f_t$ in the following manner:
\begin{equation}
\label{eq:regret-1}
\begin{split}
    &\sum _ { t = 1 } ^ { T } f _ { t } \left( x _ { t } \right) - f _ { t } \left( x ^ { * } \right) \leq \sum _ { t = 1 } ^ { T } \left\langle g _ { t } , x _ { t } - x ^ { * } \right\rangle \\
    &\leq \frac{1}{2} \sum_{t = 1}^{T} \bigg[ \|\eta_t^{-1/2} \odot (x_t - x^*)\|^2 - \|\eta_t^{-1/2} \odot (x_{t+1} - x^*)\|^2  + \|\eta_t^{1/2} \odot g_t\|^2 \bigg] \\
    &= \frac{1}{2} \Bigg[ \sum_{t = 2}^{T} \bigg[ \|\eta_t^{-1/2} \odot (x_t - x^*)\|^2 - \|\eta_{t-1}^{-1/2} \odot (x_t - x^*)\|^2 \bigg] \\
    & \quad\quad + \|\eta_1^{-1/2} \odot (x_1 - x^*)\|^2 - \|\eta_{t}^{-1/2} \odot (x_{t+1} - x^*)\|^2 + \sum_{t = 1}^{T} \|\eta_t^{1/2} \odot g_t\|^2 \Bigg]\\
    &= \frac{1}{2} \Bigg[ \sum_{t=2}^T \sum_{i=1}^d (x_{t,i} - x_i^*)^2 (\eta_{t,i}^{-1} - \eta_{t-1,i}^{-1}) +\sum_{i=1}^d \eta_{1,i}^{-1} (x_{1,i} - x_i^*)^2 \\
    & \quad\quad - \sum_{i=1}^d \eta_{t,i}^{-1} (x_{t+1,i} - x_i^*)^2 + \sum_{t=1}^T \sum_{i=1}^d g_{t,i}^2 \eta_{t,i} \Bigg].
\end{split}
\end{equation}
The first inequality is due to the convexity of functions $\{f_t\}$.
The second inequality follows from the bound in \Eqref{eq:inner-product-bound}.
For further bounding this inequality, we need the following intermediate result.

\begin{lemma}
\label{lem2}
For the parameter settings and conditions assumed in Algorithm~\ref{alg1}, we have
\[
    \sum_{t=2}^T\left| \eta_{t,i}^{-1} - \eta_{t-1,i}^{-1}\right|\leq \frac{2}{\eta(1-\lambda)^3}\Bigg[(5-4\lambda)\sqrt{T}+2\lambda-1\Bigg].
\]
\end{lemma}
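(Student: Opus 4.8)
The plan is to reduce everything to elementary estimates on the multiplier $a_{t,i} := 1 + \lambda^t b_{t,i}$, where $b_{t,i} := \frac{g_{t,i}\times m_{t,i}}{|g_{t,i}|\max|\boldsymbol{m}_t| + \epsilon} \in [-1,1]$, so that $\eta_{t,i}^{-1} = \frac{\sqrt{t}}{\eta\, a_{t,i}}$. First I would record the two facts that drive the whole argument: for every $t \ge 1$ one has $\lambda^t \le \lambda$, hence $a_{t,i} \in [1-\lambda,\, 1+\lambda]$ and in particular $a_{t,i} a_{t-1,i} \ge (1-\lambda)^2 > 0$; and the consecutive fluctuation is damped, $|a_{t,i} - a_{t-1,i}| = |\lambda^t b_{t,i} - \lambda^{t-1} b_{t-1,i}| \le \lambda^t + \lambda^{t-1} = \lambda^{t-1}(1+\lambda)$, using only $|b_{t,i}| \le 1$.

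Next I would split each summand into a step-size decay part and a multiplier fluctuation part,
\[
\eta_{t,i}^{-1} - \eta_{t-1,i}^{-1} = \frac{1}{\eta}\left[\frac{\sqrt{t} - \sqrt{t-1}}{a_{t,i}} + \sqrt{t-1}\left(\frac{1}{a_{t,i}} - \frac{1}{a_{t-1,i}}\right)\right],
\]
and bound the two pieces separately after the triangle inequality. The first piece telescopes: $\sum_{t=2}^T (\sqrt{t} - \sqrt{t-1}) = \sqrt{T} - 1$, and dividing by $a_{t,i} \ge 1-\lambda$ contributes a term of order $\frac{\sqrt{T}-1}{1-\lambda}$. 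For the second piece I would write $\frac{1}{a_{t,i}} - \frac{1}{a_{t-1,i}} = \frac{a_{t-1,i} - a_{t,i}}{a_{t,i} a_{t-1,i}}$, insert the two facts above to obtain the bound $\sqrt{t-1}\,\frac{\lambda^{t-1}(1+\lambda)}{(1-\lambda)^2}$, then use $\sqrt{t-1} \le \sqrt{T}$ together with the geometric sum $\sum_{t=2}^T \lambda^{t-1} \le \frac{1}{1-\lambda}$; the extra factor $(1-\lambda)^{-1}$ coming from this series is exactly what produces the $(1-\lambda)^{-3}$ in the statement. Adding the two contributions, clearing the common denominator $(1-\lambda)^3$, and bounding the resulting polynomial in $\lambda$ crudely (e.g. $1+\lambda \le 2$) collapses the estimate into the stated form $\frac{2}{\eta(1-\lambda)^3}\big[(5-4\lambda)\sqrt{T} + 2\lambda - 1\big]$.

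The main obstacle is the second piece. Because $b_{t,i}$ is the normalized alignment between the current gradient and the running momentum, it can flip sign and jump in magnitude between consecutive iterations, so there is no Lipschitz control of $a_{t,i}$ in $t$ beyond the trivial $|b_{t,i}| \le 1$. A naive estimate $|a_{t,i} - a_{t-1,i}| \le 2\lambda$ with the factor $\sqrt{t-1}$ in front would leave $\sum_{t=2}^T \sqrt{t-1} = O(T^{3/2})$ and destroy the $O(\sqrt{T})$ regret. The feature to exploit is precisely the decaying coefficient $\lambda^t$ in the definition of $\eta_{t,i}$: it damps the fluctuations geometrically, so the fluctuation sum converges and, after the crude $\sqrt{t-1} \le \sqrt{T}$ step, contributes only a single $\sqrt{T}$ factor. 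I would therefore keep the $\lambda^{t-1}$ weight in every bound on the second piece before any other simplification; this is the step where an incautious argument fails.
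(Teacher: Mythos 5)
Your argument is correct and follows essentially the same route as the paper's proof: both split the increment of the step size into a $1/\sqrt{t}$-decay part and a multiplier-fluctuation part, bound the latter by $\lambda^{t-1}(1+\lambda)$ using only $|b_{t,i}|\le 1$ together with $a_{t,i}\in[1-\lambda,1+\lambda]$, and sum the geometric series $\sum_t\lambda^{t-1}$ to absorb the extra $(1-\lambda)^{-1}$ that produces the $(1-\lambda)^{-3}$. The only cosmetic difference is that you difference the reciprocals $\eta_{t,i}^{-1}$ directly (so the first piece telescopes), whereas the paper differences $\eta_{t,i}$ and divides by the product $\eta_{t,i}\eta_{t-1,i}$; your constants come out slightly tighter and still imply the stated bound for $T\ge 1$.
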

\begin{proof}
For simplicity, we ignore subscript $i$ in this lemma. Let $\eta_{t+1} = \eta_{t}+ \Delta t$, $c_t=\frac{g_t m_t}{\left|g_t^i \right| \max \left(\left|m_t\right|\right)+\epsilon}$, $b_t=\lambda^t c_t$ and $a_t=\frac{1}{\sqrt{t}}$, thus we have
\[
    \eta_t=\frac{\eta}{\sqrt{t}}(1+b_t),
\]
\begin{equation*}
\begin{split}
    & \frac{\Delta t}{\eta} = \frac{\eta_{t+1}-\eta_t}{\eta} = \frac{1}{\sqrt{t+1}}\left(1+b_{t+1} \right) - \frac{1}{\sqrt{t}}\left(1+b_t \right) \\
    &= \left(\frac{1}{\sqrt{t+1}} - \frac{1}{\sqrt{t}} \right) + a_{t+1}b_{t+1}-a_t b_t \\
    &=a_{t+1}\left(b_{t+1}-b_t\right)+\left(1+b_t\right)\left(a_{t+1}-a_t\right).
\end{split}
\end{equation*}

We observe that, 
\[
   \left| a_{t+1}\left(b_{t+1}-b_t\right)\right|=\frac{\left|\lambda^{t+1}c_{t+1}-\lambda^t c_t\right|}{\sqrt{t+1}} \\
   \leq \frac{\lambda^{t+1}\left|c_{t+1}\right|+\lambda^t \left|c_t\right|}{\sqrt{t+1}}\\
   \leq \frac{2\lambda^t}{\sqrt t}.
\]
Also, observe the following: 
\[
    \left|\left(1+b_t\right)\left(a_{t+1}-a_t\right)\right|\leq \left(1+\lambda\right)\left(\frac{1}{\sqrt{t}} - \frac{1}{\sqrt{t+1}} \right)
    \leq \frac{2}{t\sqrt t}.
\]
The above inequality can be obtained from the fact that $\left|1+b_t\right|\leq 1+\left|b_t\right|\leq 1+\lambda \left|c_t\right|\leq 1+\lambda$.

Hence, we have, 
\[
    \frac{\left|\Delta t \right|}{\eta}\leq \frac{2}{\sqrt{t}}(\frac{1}{t}+\lambda^t)
\]

By definition, 
\[
    \eta\frac{1-\lambda}{\sqrt{t}}\leq \eta_t \leq \eta\frac{1+\lambda}{\sqrt{t}}
\]
And then,
\begin{equation*}
\begin{split}
    &\left|\eta_{t+1}^{-1} - \eta_{t}^{-1}\right| = \left|\frac{\eta_{t+1}-\eta_t}{\eta_{t+1}\eta_t}\right| 
    \leq \frac{\left| \Delta t \right|(t+1)}{\eta^2(1-\lambda)^2}
    \leq \frac{2(t+1)(\frac{1}{t}+\lambda^t)}{\sqrt{t}\eta(1-\lambda)^2}.
\end{split}
\end{equation*}
Finally, we have,
\begin{equation*}
\begin{split}
   &\sum_{t=2}^T\left| \eta_{t,i}^{-1} - \eta_{t-1,i}^{-1}\right| 
   \leq \frac{2}{\eta(1-\lambda)^2}\Bigg[\sum_{t=2}^T\frac{t}{t-1}\frac{1}{\sqrt{t-1}}+\sum_{t=2}^T \lambda^{t-1}\frac{t}{\sqrt{t-1}}\Bigg] \\
   & \leq \frac{2}{\eta(1-\lambda)^2}\Bigg[2\sum_{t=2}^T\frac{1}{\sqrt{t-1}}+\sum_{t=2}^T \lambda^{t-1}+\sum_{t=2}^T \lambda^{t-1}\sqrt{t-1}\Bigg] \\
   &\leq \frac{2}{\eta(1-\lambda)^3}\Bigg[(5-4\lambda)\sqrt{T}+2\lambda-1\Bigg].
\end{split}
\end{equation*}
The last inequality is due to the following upper bound:
\begin{equation*}
    \sum_{t=1}^T \frac{1}{\sqrt{t}}
        \leq 1 + \int_{t=1}^T \frac{\mathrm{d}t}{\sqrt{t}}
        = 2\sqrt{T} - 1.
\end{equation*}
\end{proof}

We now return to the proof of Theorem~\ref{the:AdaRem}.
Using the $D_\infty$ bound on the feasible region and making use of the above property in \Eqref{eq:regret-1} and Lemma~\ref{lem2}, we have
\begin{equation*}
\begin{split}
    &\sum _ { t = 1 } ^ { T } f _ { t } \left( x _ { t } \right) - f _ { t } \left( x ^ { * } \right) \\
    &\leq \frac{1}{2} \Bigg[ \sum_{t=2}^T \sum_{i=1}^d (x_{t,i} - x_i^*)^2 \left|\eta_{t,i}^{-1} - \eta_{t-1,i}^{-1}\right| +\sum_{i=1}^d \eta_{1,i}^{-1} (x_{1,i} - x_i^*)^2 + \sum_{t=1}^T \sum_{i=1}^d g_{t,i}^2 \eta_{t,i} \Bigg] \\
    & \leq \frac{D_\infty^2d}{\eta(1-\lambda)^3}\Bigg[(5-4\lambda)\sqrt{T}+2\lambda-1\Bigg]+\frac{D_\infty^2d}{2\eta(1-\lambda)}+ G_2^2d\eta(2\sqrt{T}-1).
\end{split}
\end{equation*}

It is easy to see that the regret of AdaRem is upper bounded by $O(\sqrt{T})$.
\end{proof}

\section{Experiment Details}
\subsection{Hyper-parameters grid search}
% 1.lr is cosine,nowarmup,mini_Lr=0
We run all experiments with cosine learning rate without a warmup stage and train for 100 epochs with a minibatch size of 1024 on 16 GPUs.

\subsubsection{Adaptive optimizers’ performance on deep convolutional network}
We set the base learning rate of 0.4 for SGDM just as  \citep{he2019bag}, AdaRem and AdaRem-S. For Adam, we set the base learning rate as 0.004, and choose the EMA parameter of the second momentum of gradient $\beta_2$ from \{0.99,0.999\}. For AdamW and AdaBound, we adopt the same hyper-parameters as Adam. For AdaBound, we choose the final\_lr from \{0.1,0.4\}. For RMSProp, we choose the base learning rate from \{0.04,0.004,0.0004\}, and $\beta_2$ from \{0.99,0.999\}. The momentum parameter $\beta$ of AdaRem is set as 0.999. Weight decay parameter is choosen from \{0.0001,0.0003\} for all methods. Additional details can be seen in the Table \ref{tabel3}.

\begin{table} [H] 
\centering
\caption{Hyper-parameters' setting of various optimization methods for ResNet18 on ImageNet. The bold number indicates the best one of the hyper-parameters to be selected and $\epsilon$ is a term to improve numerical stability. $\beta_1$ is the EMA parameter of the first momentum of gradient and $\beta_2$ is the EMA parameter of the second momentum of gradient.}
\label{tabel3}
\begin{tabular}{cccclcc}
% \hline
\toprule
\multirow{2}{*}{Model} & \multicolumn{6}{c}{Hyper-parameter}                                                                        \\ \cline{2-7} 
                           & lr                                    & $\beta_1$  & $\beta_2$      & \multicolumn{1}{c}{weight\_decay} & $\epsilon$    & final\_lr \\ \hline
SGDM                       & 0.4                                   & {\textbf{0.9},0.999}    & $\sim$     & \textbf{0.0001},0.0003         & $\sim$ & $\sim$     \\
Adam                       & 0.004                                 & {\textbf{0.9},0.999}    & 0.99,\textbf{0.999} & \textbf{0.0001},0.0003          & 1e-8   & $\sim$     \\
AdamW                      & 0.004                                 & {\textbf{0.9},0.999}    & 0.999      & \textbf{0.0001},0.0003          & 1e-8   & $\sim$     \\
AdaBound                   & 0.004                                 & {\textbf{0.9},0.999}    & 0.999     & \textbf{0.0001},0.0003           & 1e-8   & 0.1,\textbf{0.4}    \\
RMSProp                    & \multicolumn{1}{l}{0.04,0.004,\textbf{0.0004}} & $\sim$ & \textbf{0.99},0.999       & \textbf{0.0001},0.0003          & 1e-8   & $\sim$     \\
AdaRem                     & 0.4                                   & {0.9,\textbf{0.999}}    & $\sim$     & 0.0001,\textbf{0.0003}          & 1e-8   & $\sim$     \\
\bottomrule
\end{tabular}
\end{table}

\subsubsection{AdaRem-S vs. SGDM across various architectures}
For AdaRem-S, we choose momentum parameter from \{0.995,0.999\} of ResNet50, and the sphere radius from \{10,100\} of MobileNetV2. MobileNetV2-0.5 and MobileNetV2-1.0 employ the same hyper-parameter's setting. Additional details can be seen in the Table \ref{tabel4}.
\begin{table} [H]
\centering
\caption{Hyper-parameters' setting of AdaRem-S and SGD for ResNet-18, ResNet-50 and MobileNetV2 on ImageNet. The bold number indicates the best one of the hyper-parameters to be selected.}
\label{tabel4}
\begin{tabular}{ccccccc}
% \hline
\toprule
\multirow{2}{*}{Model} & \multirow{2}{*}{optimizer} & \multicolumn{5}{c}{Hyper-parameter}                          \\ \cline{3-7} 
                               &                            & lr  & momentum             & weight\_decay & R      & $\epsilon$    \\ \hline
\multirow{2}{*}{ResNet-18}      & SGDM                       & 0.4 & 0.9                  & 1e-4          & $\sim$ & $\sim$ \\
                               & AdaRem-S                   & 0.4 & 0.999                & 1e-4          & 10     & 1e-8   \\ \hline
\multirow{2}{*}{ResNet-50}      & SGDM                       & 0.4 & 0.9                  & 1e-4          & $\sim$ & $\sim$ \\
                               & AdaRem-S                   & 0.4 & \textbf{0.995},0.999 & 1e-4          & 10     & 1e-8   \\ \hline
\multirow{2}{*}{MobileNetV2}      & SGDM                       & 0.4 & 0.9                  & 1e-4,\textbf{4e-5}          & $\sim$ & $\sim$ \\
                               & AdaRem-S                   & 0.4 & 0.999                & 1e-4,\textbf{4e-5}          & 10,\textbf{100} & 1e-8   \\ \hline
\multirow{2}{*}{ShuffleNetV2}     & SGDM                       & 0.4 & 0.9                  & 1e-4,\textbf{4e-5}          & $\sim$ & $\sim$ \\
                               & AdaRem-S                   & 0.4 & 0.999                & 1e-4,\textbf{4e-5}          & 10,\textbf{100} & 1e-8   \\
\bottomrule
\end{tabular}
\end{table}

\subsection{Other experimental results}
\begin{figure}[h] %[H]
\centering 
\subfigure[Top-5 Error for MobileNetV2]{
\label{Figa1.sub.1}
\includegraphics[width=0.32\textwidth]{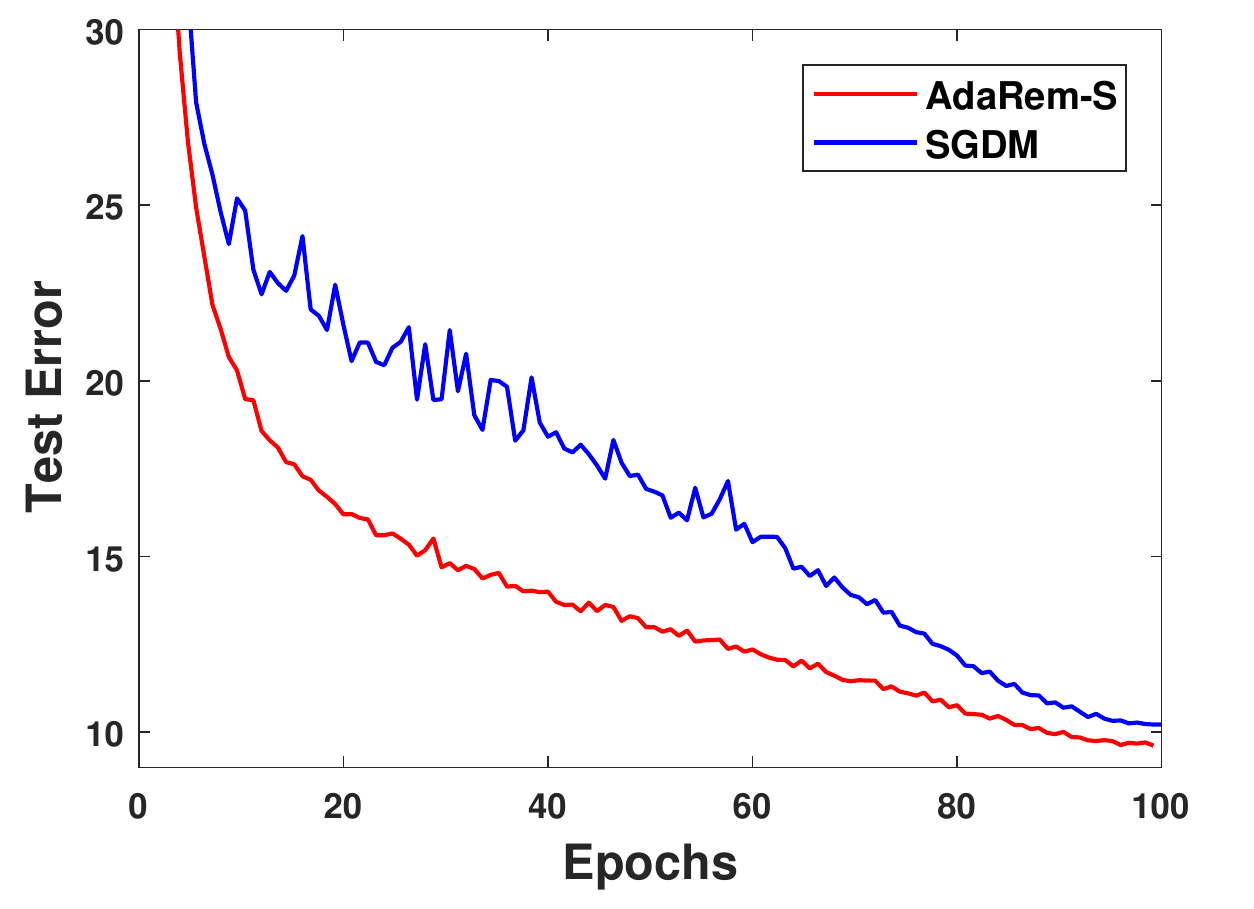}}
\subfigure[Top-5 Error for ResNet18]{
\label{Figa1.sub.2}
\includegraphics[width=0.32\textwidth]{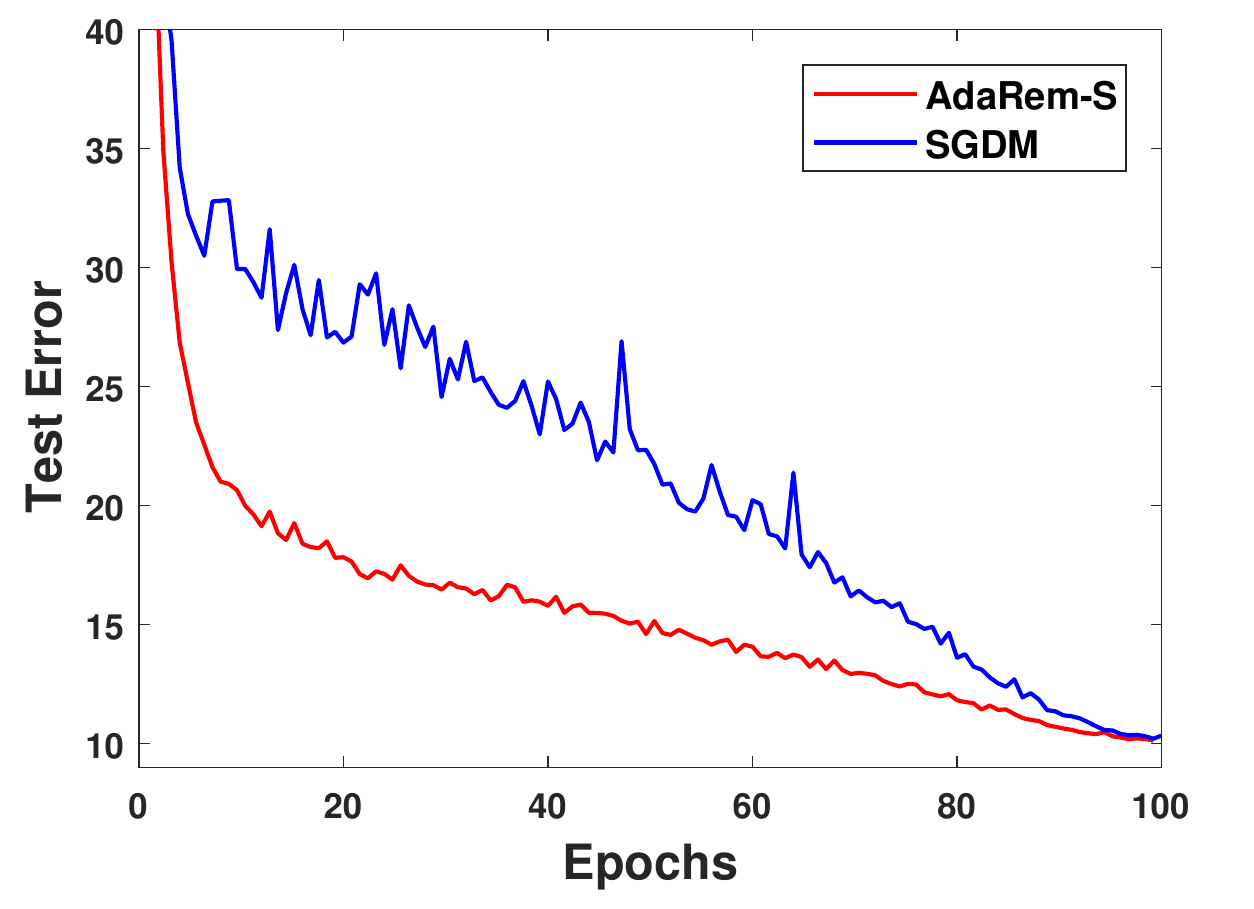}}
\subfigure[Top-5 Error for ResNet50]{
\label{Figa1.sub.3}
\includegraphics[width=0.32\textwidth]{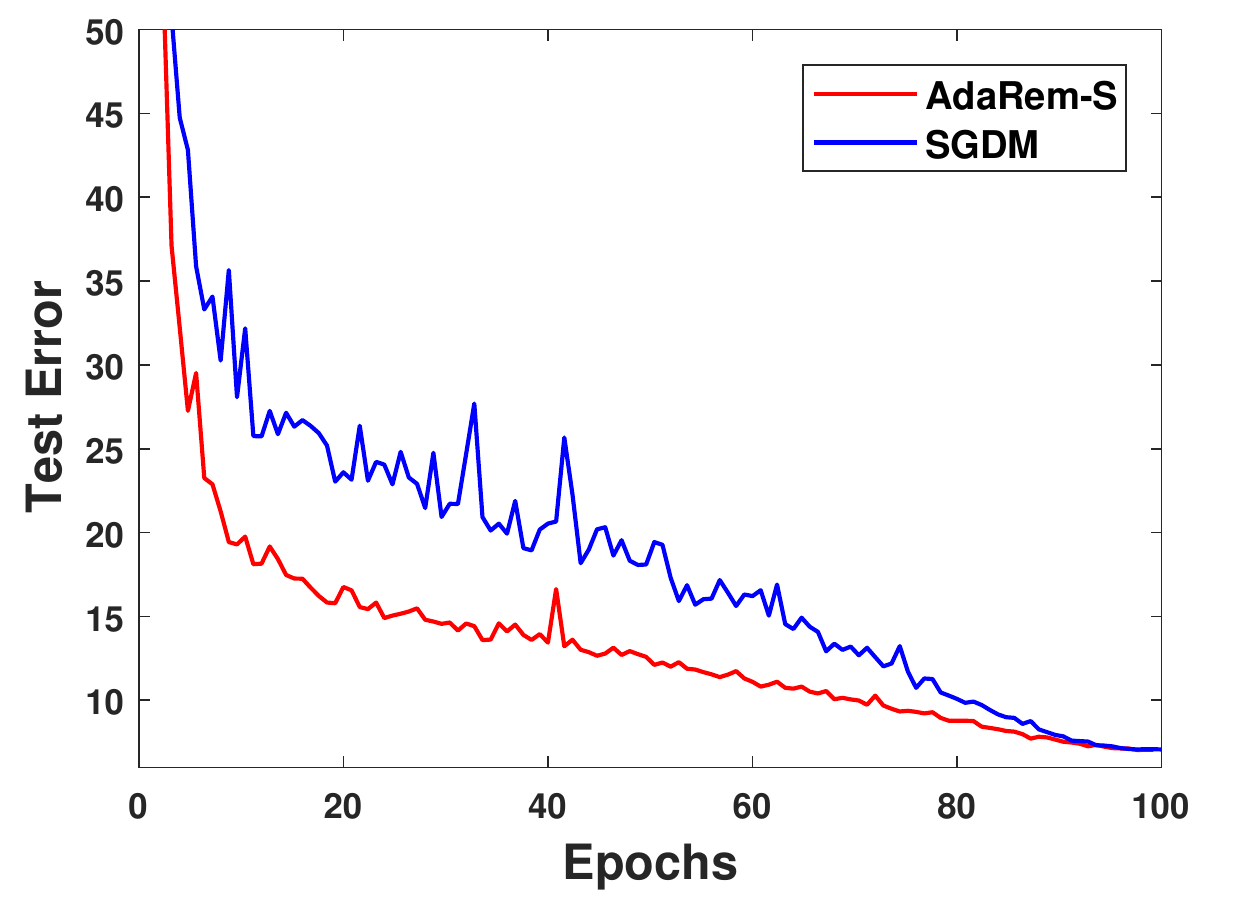}}
\caption{Top-5 error of three networks on ImageNet.}
\label{Fig.a1}
\end{figure}

\begin{table}[H]
\caption{Top-5 accuracy of various networks on the ImageNet dataset. The bold number indicates the best result.}
\label{tabel5}
\centering
\begin{tabular}{ccc}
\toprule
\multirow{2}{*}{Model} & \multicolumn{2}{c}{Top-5 Accuracy($\%$)} \\ \cline{2-3} 
                       & SGDM          & AdaRem-S                \\ \hline
ResNet50               & 92.9          & \textbf{92.92}                    \\
ResNet18               & 89.74         & \textbf{89.87}                    \\ \hline
MobileNetV2-1.0        & 89.81         & \textbf{90.34}                    \\ 
MobileNetV2-0.5        & 84.66         & \textbf{85.30}                    \\ \hline
ShuffleNetV2-1.0        & 87.68         & \textbf{87.96}                    \\ 
ShuffleNetV2-0.5        & 80.23         & \textbf{81.77}                    \\ 
\bottomrule
\end{tabular}
\end{table}

\begin{table}[H]
\caption{Train loss of various networks on the ImageNet dataset. The bold number indicates the best result.}
\label{tabel6}
\centering
\begin{tabular}{ccc}
\toprule
\multirow{2}{*}{Model} & \multicolumn{2}{c}{Train Loss} \\ \cline{2-3} 
                       & SGDM          & AdaRem-S                \\ \hline
ResNet50               & 0.932         & \textbf{0.823}                    \\
ResNet18               & 1.331         & \textbf{1.148}                    \\ \hline
MobileNetV2-1.0        & 1.462         & \textbf{1.343}                    \\ 
MobileNetV2-0.5        & 1.965         & \textbf{1.887}                    \\ \hline
ShuffleNetV2-1.0        & 1.340         & \textbf{1.337}                    \\ 
ShuffleNetV2-0.5        & 1.839         & \textbf{1.744}                    \\ 
\bottomrule
\end{tabular}
\end{table}

\end{document}